\theoremstyle{plain}
\newtheorem*{theorem*}{Theorem}
\newtheorem{theorem}{Theorem}[]
\newtheorem{lemma}{Lemma}
\DeclareMathOperator*{\argmin}{arg\,min}
\begin{document}
%
\title{Robust PCA via Nonconvex Rank Approximation}

\author{\IEEEauthorblockN{Zhao Kang, Chong Peng, Qiang Cheng}
\IEEEauthorblockA{ Department of Computer Science, Southern Illinois University, Carbondale, IL 62901, USA\\\{zhao.kang, pchong, qcheng\}@siu.edu}}


%


\maketitle

\begin{abstract}
Numerous applications in data mining and machine learning require recovering a matrix of minimal rank. Robust principal component analysis (RPCA) is a general framework for handling this kind of problems. Nuclear norm based convex surrogate of the rank function in RPCA is widely investigated. Under certain assumptions, it can recover the underlying true low rank matrix with high probability. However, those assumptions may not hold in real-world applications. Since the nuclear norm approximates the rank by adding all singular values together, which is essentially a $\ell_1$-norm of the singular values, the resulting approximation error is not trivial and thus the resulting matrix estimator can be significantly biased. To seek a closer approximation and to alleviate the above-mentioned limitations of the nuclear norm, we propose a nonconvex rank approximation. This approximation to the matrix rank is tighter than the nuclear norm. To solve the associated nonconvex minimization problem, we develop an efficient augmented Lagrange
multiplier based optimization algorithm. Experimental results demonstrate that our method outperforms current state-of-the-art algorithms in both accuracy and efficiency. 
\end{abstract}


%
\IEEEpeerreviewmaketitle

\section{Introduction}
In many machine learning and data mining applications, the dimensionality of data is very high, such as digital images, video sequences, text documents, genomic data, social networks, and financial time series. Data mining on such data sets is challenging due to the curse of dimensionality. Dimensionality reduction techniques, which project the original high-dimensional feature space to a low-dimensional space, have been extensively explored. Among them, principal component analysis (PCA) \cite{jolliffe2002principal}, which finds a small number of orthogonal basis vectors that characterize most of the variability of the data set, is well established and commonly used. However, PCA may fail spectacularly even when a single grossly corrupted entry exists in the data. To enhance its robustness to outliers or corrupted observations, early attempts on robust PCA (RPCA) have been made \cite{xu1995robust}, \cite{croux2000principal}, \cite{de2001robust}, \cite{de2003framework}, \cite{croux2005high}. Nevertheless, none of these algorithms yields a solution in polynomial-time with strong performance guarantees under broad conditions. 

Due to the seminal work of \cite{wright2009robust}, \cite{candes2011robust}, a more recent version of RPCA becomes popular these days. The idea is to recover a low-rank matrix $L$ from highly corrupted observations $X=L+S\in \mathcal{R}^{m\times n}$. Entries in the sparse component $S$ can have arbitrarily large magnitude. This has numerous applications ranging from recommender system design to anomaly detection in dynamic networks.  For example, for videos and face images under varying illumination, the background and underlying clean face image are regarded as the low-rank component while the moving objects and shadows represent the sparse part \cite{candes2011robust}; common words in a collection of text documents can be captured by a low-rank matrix while the few words that distinguish each document from others can be represented by a sparse matrix \cite{min2010decomposing}. 

Mathematically, this kind of problem can be modeled as 
\begin{equation}
\label{originalRPCA}
\min_{L, S} rank(L)+\lambda\|S\|_0\quad s.t.\quad X=L+S,
\end{equation}
where 
 $\lambda$ a weight parameter. Unfortunately, (\ref{originalRPCA}) is generally an NP-hard problem. By relaxing the nonconvex rank function and the $\ell_0$-norm into the nuclear norm and $\ell_1$-norm respectively, a convex formulation can be yielded
\begin{equation}
\label{cnxRPCA}
\min_{L, S} \|L\|_*+\lambda\|S\|_1\quad s.t.\quad X=L+S,
\end{equation}
 where $\|L\|_*=\sum_i \sigma_i(L)$; i.e., the nuclear norm of $L$ is the sum of its singular values, and $\|S\|_1=\sum_{ij}|S_{ij}|$. Under incoherence assumptions, both low-rank and sparse components can be recovered exactly with an overwhelming probability  \cite{candes2011robust}. 

Despite its convex formulation and ease of optimization, RPCA in (\ref{cnxRPCA}) has two major limitations. First, the underlying matrix may have no incoherence guarantee \cite{candes2011robust} in practical scenarios, and the data may be grossly corrupted. Under these circumstances, the resulting global optimal solution to (\ref{cnxRPCA}) may deviate significantly from the truth. Second, RPCA shrinks all the singular values equally. The nuclear norm is essentially an $\ell_1$ norm of the singular values and it is well known that $\ell_1$ norm has a shrinkage effect and leads to a biased estimator \cite{fan2001variable}, \cite{zhang2010nearly}. This implies that the nuclear norm over-penalizes large singular values, and consequently it may only find a much biased solution.  Nonconvex penalties to $\ell_1$ norm such as smoothly clipped absolute deviation penalty \cite{fan2001variable}, minimax concave penalty \cite{zhang2010nearly}, capped-$\ell_1$ regularization \cite{gong2012multi}, and truncated $\ell_1$ function \cite{xiang2013efficient} have shown that they provide better estimation accuracy and variable selection consistency \cite{wang2014optimal}. Recently, nonconvex relaxations to the nuclear norm have received increasing attention \cite{kang2015cikm}. Variations of the nuclear norm, e.g., weighted
nuclear norm \cite{gu2014weighted}, \cite{zhong2015nonconvex}, singular value thresholding
\cite{cai2010singular}, and truncated nuclear norm \cite{hu2013fast} are proposed and outperform the standard nuclear norm. However, their applications are still quite limited and they are often designed for specific applications.    



In this paper, we propose a novel nonconvex function to directly approximate the rank, which provides a tighter approximation than the nuclear norm does. This is crucial to reveal the rank in low-rank matrix estimation. To solve this nonconvex model, we devise an Augmented Lagrange
Multiplier (ALM) based optimization algorithm. Theoretical convergence
analysis shows that our iterative optimization at least 
converges to a stationary point. Extensive experiments on three representative applications confirm the advantages of our approach.

\section{Related Work}
The convex approach to RPCA in (\ref{cnxRPCA}) has been studied thoroughly. It is proved that when the locations of nonzero entries of $S$ are uniformly distributed, and when the rank of $L$ and the sparsity of $S$ satisfy some mild conditions, $L$ and $S$ can be exactly recovered with a high probability \cite{candes2011robust}. In the literature, numerous algorithms have been developed to solve (\ref{cnxRPCA}), e.g., SVT \cite{cai2010singular}, APGL \cite{toh2010accelerated}, FISTA \cite{beck2009fast}, and ALM \cite{lin2010augmented}. Among them, ALM based approach is the most popular. Although the theory is elegant, convex technique is still computationally quite expensive and has poor convergence rate \cite{netrapalli2014non}. Furthermore, (\ref{cnxRPCA}) breaks down when large errors concentrate only on a number of columns of $S$ \cite{xu2012robust}, \cite{liu2013robust}.   

To incorporate the spatial connection information of the sparse elements, $\ell_{2,1}$-norm is introduced in outlier pursuit \cite{xu2010robust}, \cite{mccoy2011two}:
\begin{equation}
\label{l21RPCA}
\min_{L, S} \|L\|_*+\lambda\|S\|_{2,1}\quad s.t.\quad X=L+S.
\end{equation}
Here, $\|S\|_{2,1}:=\sum_{j=1}^n \sqrt{\sum_{i=1}^m S_{ij}^2}$ can detect outliers with column-wise sparsity, while $\|S\|_1$ treats each entry independently. Theoretical analysis on this model is difficult. In this model, only the column space of $L$ and the column support of $S$ can be exactly recovered \cite{chen2011robust}, \cite{xu2012robust}. When the rank
of the intrinsic matrix $r$ is comparable to the number of samples $n$, the working range of outlier pursuit is limited.

To alleviate the deficiency of convex relaxations, capped norm based nonconvex RPCA (CNorm) has been proposed and it solves the following problem \cite{sun2013robust}:
\begin{equation}
\begin{split}
\label{cnorm}
&\min_{L,S}\quad \frac{1}{\theta_1}\|L\|_*+\frac{1}{\theta_2}\|S\|_1-\left[\frac{1}{\theta_1}P_1(L)+\frac{1}{\theta_2}P_2(S)\right]\\
&s.t. \quad \|X-L-S\|_F^2\leq\delta^2,
\end{split}
\end{equation}
where $P_1(L)=\sum_{i=1}^{\min(m,n)} \max(\sigma_i(L)-\theta_1, 0)$, $P_2(S)=\sum_{ij} \max(|S_{ij}|-\theta_2, 0)$ for some small parameters $\theta_1$, $\theta_2>0$, and $\delta$ denotes the level of Gaussian noise. If all singular values of $L$ are greater than $\theta_1$ and all absolute values of $S$ elements are greater than $\theta_2$, then the objective function in (\ref{cnorm}) falls back to (\ref{originalRPCA}). However, it is hard to provide any convergence guarantee about this nonconvex method. More importantly, as we will show in the experimental part, it cannot deal with large scale data well. 

By combining the simplicity of PCA and elegant theory of convex RPCA, a recent paper has proposed a new nonconvex RPCA \cite{netrapalli2014non}. The idea is to project the residuals onto the set of low-rank and sparse matrices alternatively. Specifically, it proceeds in $r$ (the desired rank of $L^*$) stages, and compute rank-$k$ projection in each stage, where $k\in\{1, 2, ..., r\}$. During this process, sparse errors are suppressed by discarding matrix elements with large approximation errors. This method enjoys several nice properties, including low complexity, global convergence guarantee, fast convergence rate, and theoretical guarantee for exact recovery of the low-rank matrix. However, it needs the knowledge of three parameters: sparsity of $S^*$, incoherence of $L^*$, and rank $r$ of $L^*$. Such knowledge is not always readily available.

\section{Proposed algorithm}
In this section, we present a novel matrix rank approximation, and propose a nonconvex RPCA algorithm.
\subsection{Problem formulation}
Consider the general framework for RPCA 
\begin{equation}
\min_{L, S} \|L\|_\gamma+\lambda \|S\|_l \quad s.t. \quad X=L+S,
\label{originalproblem}
\end{equation} 
where $\|\cdot\|_\gamma$ denotes a rank approximation which we term $\gamma$-norm, and $\|\cdot\|_l$ represents a proper norm of noise and outliers. 
\begin{figure}[ht]
\centering

\includegraphics[width=.5\textwidth]{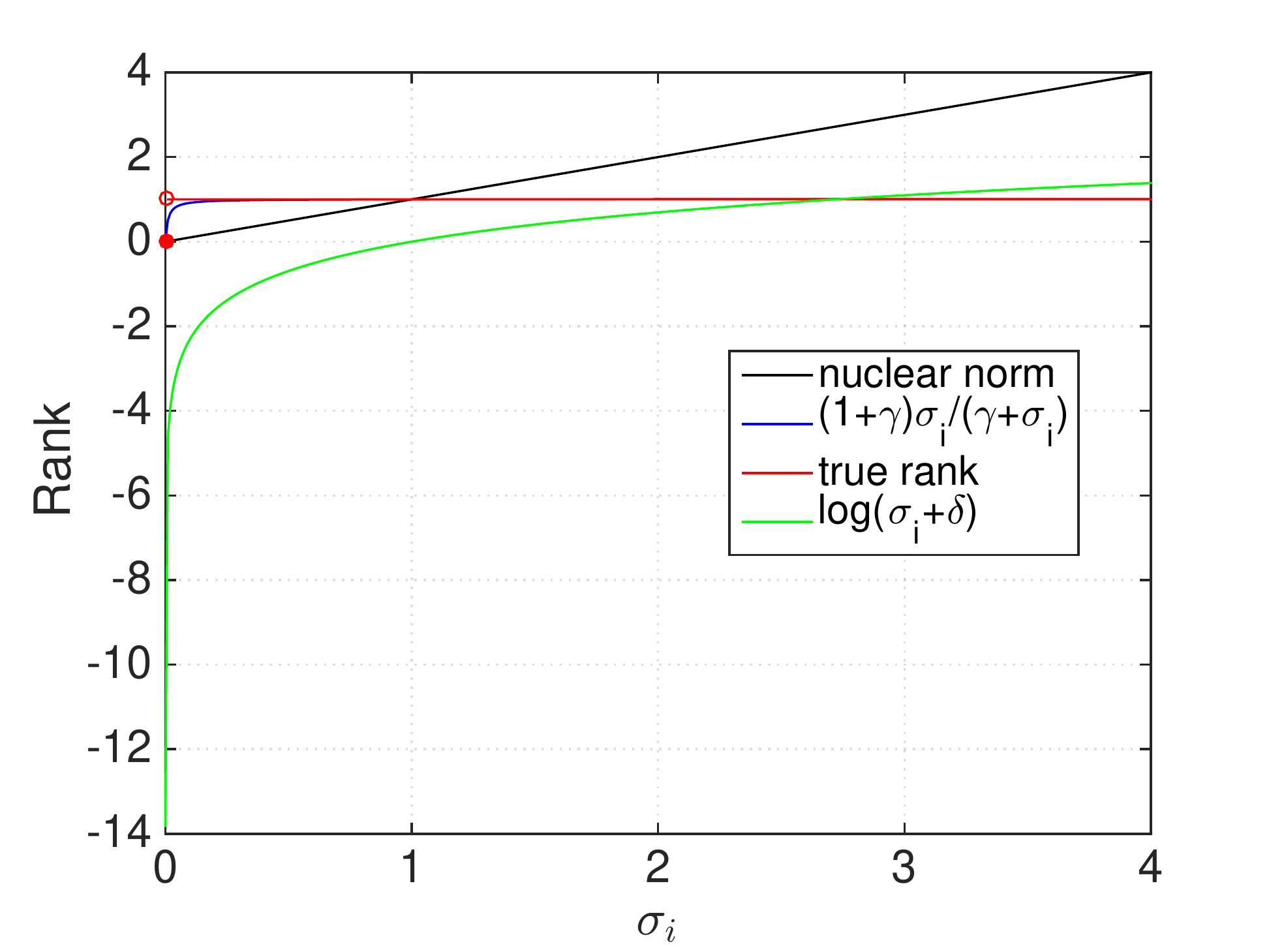}
\caption{The contribution of different functions to the rank with respect to a varying singular value. The true rank is 1 for nonzero $\sigma_i$. }
\label{rankcomp}
\end{figure}

We define $\gamma$-norm of matrix $L$ as 
\begin{equation}
\|L\|_\gamma=\sum_{i}\frac{(1+\gamma)\sigma_i(L)}{\gamma+\sigma_i(L)}, \quad \gamma>0.
\end{equation}
It can be observed that $\lim\limits_{\gamma \rightarrow 0}\|L\|_\gamma=rank(L)$, $\lim\limits_{\gamma \rightarrow \infty}\|L\|_\gamma=\|L\|_*$, and it coincides with true rank with $\sigma_i(L), i=1, \cdots, \min(m, n),$ being all 0 and all 1. Furthermore, $\|L\|_\gamma$ is unitarily invariant, that is, $\|L\|_{\gamma} = \| U L V \|_{\gamma}$ for any orthonormal $U \in {\mathcal{R}}^{m \times m}$ and $V \in {\mathcal{R}}^{n \times n}$. Certainly it is not a real norm. Figure \ref{rankcomp} plots several rank relaxations in the literature. Among them, a log-determinant function, $\mathrm{logdet}(L+\epsilon I)$, where $\epsilon$ is a very small constant (e.g., $10^{-6}$), has been well studied \cite{fazel2003log}. As we can see, our formulation ($\gamma=0.01$ is used in this figure and our experiments) closely matches the true rank, while the nuclear norm deviates considerably when the singular values depart from 1.  As a result, the proposed $\gamma$-norm overcomes the imbalanced penalization by different singular values in convex nuclear norm. On the other hand, (\ref{originalproblem}) is a nonconvex formulation, which is usually difficult to optimize. In the next section, we design an effective algorithm to solve it. 

\subsection{Optimization}
For problem (\ref{originalproblem}), by introducing a Lagrange multiplier $Y$ and a quadratic penalty term, we can remove the equality constraint and construct the augmented Lagrangian function:
\begin{equation}
\label{lagrang}
\begin{split}
&\mathcal{L}(L, S, Y,\mu) =\|L\|_\gamma+\lambda \|S\|_l+  \\
&\langle Y,L+S-X\rangle +\frac{\mu}{2}\|L+S-X\|_F^2,
\end{split}
\end{equation}
where $\langle \cdot,\cdot \rangle$ is the inner product of two matrices, that is, $\langle A, B\rangle=tr(A^TB)$, and $\mu$ is a positive parameter. 
An iterative approach is applied to update $L$, $S$ and $Y$ iteratively. At the $(t+1)$th step, we update $L^{t+1}$ by solving the following subproblem:
\begin{equation}
L^{t+1}=\argmin_L \hspace{.1cm} \|L\|_{\gamma}+\frac{\mu^t}{2}\left\|L-( X-S^t-\frac{Y^t}{\mu^t})\right\|_F^2.
\label{upZ}
\end{equation}
To solve (\ref{upZ}), we first develop the following theorem and provide the proof in Appendix A.
\begin{theorem}
\label{firsthm}
Let $A=U\Sigma_A V^T$ be the SVD of $A\in \mathbf{\mathcal{R}}^{m\times n}$ and $\Sigma_A=diag(\sigma_A)$. Let $F(Z)=f\circ\sigma(Z)$ be a unitarily invariant function 
 and $\mu>0$ 
. Then an optimal solution to the following problem
\begin{equation}
\min_Z F(Z)+\frac{\mu}{2}\left\|Z-A\right\|_F^2,
\label{theoremprob}
\end{equation}
is $Z^*= U\Sigma_Z^*V^T$, where $\Sigma_Z^*=diag(\sigma^*)$ and  $\sigma^* = \mathrm{prox}_{f, \mu} (\sigma_{A})$. Here $\mathrm{prox}_{f, \mu} (\sigma_{A})$ is the Moreau-Yosida operator, defined as 
\begin{equation}
\label{scalar}
\mathrm{prox}_{f, \mu} (\sigma_A) := \argmin_{\sigma\geq0} f(\sigma) + \frac{\mu}{2}\|\sigma - \sigma_A\|_2^2.
\end{equation}
\end{theorem}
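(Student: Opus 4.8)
The plan is to reduce the matrix problem (\ref{theoremprob}) to a scalar problem over singular values by combining unitary invariance with von Neumann's trace inequality. First I would expand the quadratic term as
\[
\frac{\mu}{2}\|Z-A\|_F^2=\frac{\mu}{2}\Bigl(\|Z\|_F^2-2\langle Z,A\rangle+\|A\|_F^2\Bigr),
\]
and observe that $\|Z\|_F^2=\sum_i\sigma_i(Z)^2$ and $\|A\|_F^2=\sum_i\sigma_i(A)^2$ depend on $Z$ and $A$ only through their singular values. Since $F=f\circ\sigma$ is unitarily invariant, $F(Z)=f(\sigma(Z))$ likewise depends on $Z$ only through its singular values. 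Hence the only term in the objective that couples the singular vectors of $Z$ to those of $A$ is the inner product $\langle Z,A\rangle=\mathrm{tr}(Z^TA)$.

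The key step is to control this cross term. By von Neumann's trace inequality, for any $Z$ one has $\mathrm{tr}(Z^TA)\le\sum_i\sigma_i(Z)\sigma_i(A)$, with the singular values taken in decreasing order, and equality holds exactly when $Z$ and $A$ admit a simultaneous singular value decomposition, i.e. $Z=U\,\mathrm{diag}(\sigma(Z))\,V^T$ with the same $U,V$ that diagonalize $A$. Because this cross term enters the objective with a negative sign, making $\mathrm{tr}(Z^TA)$ as large as possible lowers the objective; therefore, for any prescribed multiset of singular values of $Z$, the minimizer shares the singular vectors of $A$. Fixing this alignment, the objective collapses to
\[
f(\sigma(Z))+\frac{\mu}{2}\sum_i\bigl(\sigma_i(Z)-\sigma_i(A)\bigr)^2=f(\sigma)+\frac{\mu}{2}\|\sigma-\sigma_A\|_2^2,
\]
which is precisely the scalar problem defining $\mathrm{prox}_{f,\mu}(\sigma_A)$ in (\ref{scalar}). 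Minimizing over $\sigma\ge0$ then gives $\sigma^*=\mathrm{prox}_{f,\mu}(\sigma_A)$ and the claimed optimizer $Z^*=U\Sigma_Z^*V^T$.

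The main obstacle I anticipate is the careful handling of von Neumann's inequality together with the ordering of the singular values. The bound pairs the singular values of $Z$ and $A$ in the same decreasing order, so I must argue that at the optimum $\sigma(Z)$ may be taken sorted consistently with $\sigma_A$. This follows from the rearrangement inequality applied to the cross term, combined with the fact that unitary invariance forces $f$ to be a symmetric function of its arguments: reordering the entries of $\sigma(Z)$ leaves both $f(\sigma(Z))$ and $\|\sigma(Z)\|_2^2$ unchanged while it can only make the negative cross term more favorable. A secondary point is that the theorem asserts \emph{an} optimal solution rather than a unique one, which is appropriate because repeated singular values of $A$ render the factors $U,V$ non-unique; I would flag this explicitly to avoid over-claiming uniqueness.
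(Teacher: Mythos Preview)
Your argument is correct and is essentially the same route the paper takes: reduce the matrix problem to a problem on singular values via a spectral inequality, then identify the minimizer as the proximal map. The only cosmetic difference is that you expand the square and invoke von Neumann's trace inequality on the cross term $\langle Z,A\rangle$, whereas the paper first substitutes $X=U^{T}ZV$ and applies the Hoffman--Wielandt inequality $\|\Sigma_X-\Sigma_A\|_F\le\|X-\Sigma_A\|_F$ to the full quadratic; since Hoffman--Wielandt for singular values is equivalent to von Neumann's bound after expanding $\|X-\Sigma_A\|_F^2$, the two presentations are interchangeable.
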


In our case, the new objective function in (\ref{scalar}) is a combination of concave and convex functions. This intrinsic structure motivates us to use difference of convex (DC) programing \cite{tao1997convex}. DC algorithm decomposes a nonconvex function as the difference of two convex functions and iteratively optimizes it by linearizing the concave term at each iteration. At the $(k+1)$th inner iteration,   
\begin{equation}
\label{vectorized}
\sigma^{k+1}=\argmin_{\sigma\geq 0} \quad \langle w_k,\sigma \rangle+ \frac{\mu^t}{2}\|\sigma-\sigma_A\|_2^2,
\end{equation}
which admits a closed-form solution 
\begin{equation}
\label{optisigma} 
\sigma^{k+1}=(\sigma_A-\frac{\omega_k}{\mu^t})_+,
\end{equation}
where $\omega_k=\partial f(\sigma^k)$ is the gradient of $f(\cdot)$ at $\sigma^k$ and $U diag\lbrace\sigma_A\rbrace V^T$ is the SVD of $X-S^{t}-\frac{Y^t}{\mu^t}$. After a number of iterations, it converges to a local optimal point $\sigma^*$. Then $L^{t+1}=U diag\lbrace\sigma^*\rbrace V^T$.

For $S$ optimization, 
\begin{equation}
S^{t+1}= \argmin_S \hspace{.05cm}\lambda \left\|S\right\|_l+\frac{\mu^t}{2}\left\|S-(X-L^{t+1}-\frac{Y^t}{\mu^t})\right\|_F^2.
\label{solveS}
\end{equation}
Depending on the choice of $l$, we obtain different closed-form solutions to the above subproblem. According to the result in \cite{yuan2006model} which is also given as Lemma \ref{21norm} in Appendix B, for $\|S\|_{2,1}$ norm,
\begin{eqnarray}
\label{error21}
[S^{t+1}]_{:,i}=\left\{
\begin{array}{ll} \frac{\left\|Q_{:,i}\right\|_2-\frac{\lambda}{\mu^t}}{\left\|Q_{:,i}\right\|_2}Q_{:,i}, & \mbox{if $\left\|Q_{:,i}\right\|_2>\frac{\lambda}{\mu^t}$};\\
0, & \mbox{otherwise,}
\end{array}\right.
\end{eqnarray}
where $Q=X-L^{t+1}-\frac{Y^t}{\mu^t}$ and $\left[S^{t+1}\right]_{:,i}$ is the $i$-th column of $S^{t+1}$. 

When modeled by $\|S\|_1$ norm, based on Lemma \ref{1norm}, we have 
\begin{equation}
\label{error1}
\left[S^{t+1}\right]_{ij}=
\max\left(|Q_{ij}|-\frac{\lambda}{\mu^t},0\right) sign(Q_{ij}).
\end{equation}

The updates of $Y$ and $\mu$ are standard:
\begin{equation}
\label{multi}
Y^{t+1}=Y^t+\mu^t (L^{t+1}-X+S^{t+1}),
\end{equation}
\begin{equation}
\label{upmu}
\mu^{t+1}=\rho\mu^t,
\end{equation}
where $\rho>1$. 
The complete procedure is outlined in Algorithm 1. 
\begin{algorithm}[tb]
\small
   \caption{Solving problem (\ref{originalproblem}) }
   \label{alg:ncxpca}
    {\bfseries Input:} data matrix $X\in \mathbf{\mathcal{R}}^{m\times n}$, parameters $\lambda>0, \mu^0>0$, and $\rho>1$.\\
 {\bfseries Initialize:} $S=0$, $ Y=0$.\\
  {\bfseries REPEAT}
\begin{algorithmic}[1]
  \STATE Update $L$ by (\ref{upZ}). 
   \STATE Solve $S$ by either (\ref{error21}) or (\ref{error1}) according to  $l$.
\STATE Update $Y$ and $\mu$ by (\ref{multi}) and (\ref{upmu}), respectively. 

\end{algorithmic}
   \textbf{UNTIL} {converge.}
\end{algorithm}

\section{Convergence analysis}
Convergence analysis of nonconvex optimization problem is usually difficult. In this section, we will show that our algorithm has at least a convergent subsequence which tends to a stationary point. While the final solution might not be a globally optimal one, all our
experiments show that our algorithm converges to a solution
that produces promising results. 

For convenience, we write $\|L\|_\gamma$ as $F(L)$ in (\ref{lagrang})  
\begin{equation}
\label{newlagrang}
\begin{split}
&\mathcal{L}(L, S, Y,\mu) =F(L)+\lambda \|S\|_l+  \\
&\langle Y,L+S-X\rangle +\frac{\mu}{2}\|L+S-X\|_F^2,
\end{split}
\end{equation}

\begin{lemma}
\label{lemma11}
The sequence $\{Y^t\}$ is bounded.
\end{lemma}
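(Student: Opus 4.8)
The plan is to read the bound off the first-order optimality condition of the $S$-subproblem, which---unlike the $L$-subproblem---is a genuinely convex program whose minimizer is characterized exactly by the closed forms (\ref{error21}) and (\ref{error1}). First I would write the stationarity condition for (\ref{solveS}). Since $\|\cdot\|_l$ is convex, $S^{t+1}$ is a global minimizer if and only if
\begin{equation}
0 \in \lambda\,\partial\|S^{t+1}\|_l + \mu^t\Big(S^{t+1} - X + L^{t+1} + \tfrac{Y^t}{\mu^t}\Big),
\end{equation}
which rearranges to $0 \in \lambda\,\partial\|S^{t+1}\|_l + \mu^t(L^{t+1}+S^{t+1}-X) + Y^t$.

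Second, I would substitute the multiplier update (\ref{multi}), observing that the quantity $\mu^t(L^{t+1}+S^{t+1}-X)+Y^t$ is precisely $Y^{t+1}$. This collapses the stationarity condition to the clean relation
\begin{equation}
-Y^{t+1} \in \lambda\,\partial\|S^{t+1}\|_l,
\end{equation}
so that $-Y^{t+1}/\lambda$ is a subgradient of the norm $\|\cdot\|_l$ at $S^{t+1}$. Third, I would invoke the standard fact that the subdifferential of any norm is contained in the unit ball of its dual norm, so every element of $\partial\|S^{t+1}\|_l$ has dual norm at most $1$. For $\|\cdot\|_l=\|\cdot\|_1$ the dual is $\|\cdot\|_\infty$, and for $\|\cdot\|_l=\|\cdot\|_{2,1}$ the dual is the column-wise $\ell_{2,\infty}$ norm; in either case the dual unit ball is a bounded set. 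Consequently $\|Y^{t+1}\|$ measured in the appropriate dual norm is at most $\lambda$, and since all norms on the finite-dimensional space $\mathcal{R}^{m\times n}$ are equivalent, $\|Y^{t+1}\|_F \le c\lambda$ for a constant $c$ depending only on $m$, $n$ and the choice of $l$. As this bound is uniform in $t$, the sequence $\{Y^t\}$ is bounded.

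The argument has no serious obstacle; it is the standard ALM boundedness estimate. The only point requiring care is the choice of subproblem: the $L$-update in (\ref{upZ}) is nonconvex and the DC inner loop delivers only a local stationary point, so I deliberately extract the bound from the convex $S$-subproblem, whose \emph{exact} optimality condition is guaranteed by the closed-form solutions (\ref{error21}) and (\ref{error1}). It is this exactness of $S^{t+1}$ as a true minimizer---rather than an approximate one---that lets the inclusion $-Y^{t+1}\in\lambda\,\partial\|S^{t+1}\|_l$ hold, and hence lets the dual-ball bound transfer directly to $Y^{t+1}$.
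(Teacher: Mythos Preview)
Your proposal is correct and follows essentially the same route as the paper: both arguments read the optimality condition of the convex $S$-subproblem, substitute the multiplier update (\ref{multi}) to obtain $-Y^{t+1}\in\lambda\,\partial\|S^{t+1}\|_l$, and then bound the subgradient of the norm. The only cosmetic difference is that the paper bounds $\|\partial_S\|S\|_1\|_F^2\le mn$ (and similarly for $\|\cdot\|_{2,1}$) by a direct entrywise count, whereas you invoke the dual-norm characterization of the subdifferential; these are two phrasings of the same fact.
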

\begin{proof}
$S^{t+1}$ satisfies the first-order necessary local optimality condition,
\begin{equation}
\begin{split}
\label{bound1}
&0\in\partial_S \mathcal{L}\left(L^{t+1}, S, Y^t, \mu^{t}\right)|_{S^{t+1}}\\
=&\partial_S \left(\lambda \|S\|_l\right)|_{S^{t+1}}+Y^t+\mu^{t}\left(L^{t+1}-X+S^{t+1}\right)\\
=&\partial_S \left(\lambda\|S\|_l\right)|_{S^{t+1}}+Y^{t+1}.
\end{split}
\end{equation}
For $\|S\|_1$ case, since  $||S||_1$ is nonsmooth at $S_{ij}=0$, we redefine subgradient $\left[\partial_S\|S\|_1\right]_{ij}=0$ if $S_{ij}=0$. Then $0\leq \| \partial_S \|S\|_1 \|_F^2\leq mn $, hence $\partial_S (\lambda\|S\|_1)|_{S^{t+1}}$ is bounded. Similarly, it can be shown that $\partial_S (\lambda\|S\|_{2,1})|_{S^{t+1}}$ is also bounded. Thus $\{Y^t\}$ is bounded.
\end{proof}

\begin{lemma}
\label{lemma12}
$\{L^t\}$ and $\{S^t\}$ are bounded if $\sum\limits_{t=1}^{\infty} \frac{\mu^{t}+\mu^{t+1}}{(\mu^t)^2}<\infty$.
\end{lemma}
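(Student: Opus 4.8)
The plan is to monitor the augmented Lagrangian values $\mathcal{L}(L^{t}, S^{t}, Y^{t}, \mu^{t})$ along the iterations and then read off the primal bounds from them. First I would use the fact that each outer iteration is a block-coordinate descent on $\mathcal{L}(\cdot,\cdot,Y^t,\mu^t)$: since $S^{t+1}$ is the exact minimizer of its convex subproblem and the inner DC loop for $L$ is a descent method that we may initialize at $L^t$, the update does not increase the Lagrangian, i.e. $\mathcal{L}(L^{t+1}, S^{t+1}, Y^t, \mu^t)\le \mathcal{L}(L^t, S^t, Y^t, \mu^t)$. The only remaining change over one step comes from refreshing the multiplier and the penalty. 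Substituting $L^{t+1}+S^{t+1}-X=(Y^{t+1}-Y^t)/\mu^t$ from \eqref{multi}, a direct calculation gives
\begin{equation*}
\mathcal{L}(L^{t+1}, S^{t+1}, Y^{t+1}, \mu^{t+1}) - \mathcal{L}(L^{t+1}, S^{t+1}, Y^{t}, \mu^{t}) = \frac{\mu^{t}+\mu^{t+1}}{2(\mu^t)^2}\,\|Y^{t+1}-Y^t\|_F^2,
\end{equation*}
which is exactly where the series in the hypothesis comes from.

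Combining the two relations yields $\mathcal{L}(L^{t+1}, S^{t+1}, Y^{t+1}, \mu^{t+1})\le \mathcal{L}(L^{t}, S^{t}, Y^{t}, \mu^{t}) + \tfrac{\mu^{t}+\mu^{t+1}}{2(\mu^t)^2}\|Y^{t+1}-Y^t\|_F^2$. By Lemma~\ref{lemma11} the sequence $\{Y^t\}$ is bounded, so $\|Y^{t+1}-Y^t\|_F^2\le C$ for some constant $C$. Summing the recursion and invoking the assumption $\sum_{t}\tfrac{\mu^t+\mu^{t+1}}{(\mu^t)^2}<\infty$ then shows that $\{\mathcal{L}(L^{t}, S^{t}, Y^{t}, \mu^{t})\}$ is bounded above by a constant independent of $t$.

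To turn this into primal bounds I would complete the square in \eqref{newlagrang}:
\begin{equation*}
\mathcal{L}(L^{t}, S^{t}, Y^{t}, \mu^{t}) + \frac{\|Y^t\|_F^2}{2\mu^t} = F(L^t)+\lambda\|S^t\|_l + \frac{\mu^t}{2}\Bigl\|L^t+S^t-X+\tfrac{Y^t}{\mu^t}\Bigr\|_F^2.
\end{equation*}
Because $\mu^t\ge\mu^0$ and $\{Y^t\}$ is bounded, the left-hand side is bounded above; since $F(L^t)\ge0$, $\lambda\|S^t\|_l\ge0$ and the squared residual term is nonnegative, each of these three pieces is separately bounded above. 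Boundedness of $\lambda\|S^t\|_l$ immediately gives boundedness of $\{S^t\}$, since $\|\cdot\|_l$ is a genuine norm.

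The main obstacle, and the step requiring the most care, is that this argument does \emph{not} control $\{L^t\}$: the map $\|\cdot\|_\gamma$ is not a norm and is in fact bounded above by $(1+\gamma)\min(m,n)$, so a bound on $F(L^t)$ carries no information about the magnitude of $L^t$. Instead I would exploit the bound on the squared residual: since $\tfrac{\mu^t}{2}\|L^t+S^t-X+Y^t/\mu^t\|_F^2$ is bounded above while $\mu^t\to\infty$ (because $\mu^{t+1}=\rho\mu^t$ with $\rho>1$), the residual $\|L^t+S^t-X+Y^t/\mu^t\|_F$ tends to $0$ and is in particular bounded. Writing $L^t = X - S^t - \tfrac{Y^t}{\mu^t} + \bigl(L^t+S^t-X+\tfrac{Y^t}{\mu^t}\bigr)$, and using that $X$ is fixed, $\{S^t\}$ and $\{Y^t\}$ are bounded, and $\mu^t\ge\mu^0$, I conclude that $\{L^t\}$ is bounded as well.
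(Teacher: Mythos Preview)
Your proposal is correct and follows essentially the same route as the paper: establish the one-step identity for the change in $\mathcal{L}$ under the $(Y,\mu)$ update, combine it with the block-descent inequality for the $(L,S)$ update, telescope using the summability hypothesis and Lemma~\ref{lemma11}, and then read off primal bounds from the completed-square identity. The only cosmetic difference is that the paper tracks $\mathcal{L}(L^{t+1},S^{t+1},Y^{t},\mu^{t})$ whereas you track $\mathcal{L}(L^{t},S^{t},Y^{t},\mu^{t})$, which is an index shift. Your explicit observation that $\|\cdot\|_\gamma$ is bounded above and hence non-coercive, so that the bound on $\{L^t\}$ must come from the residual term rather than from $F(L^t)$, is exactly what the paper means by its terse sentence ``By the last term on the right-hand of~(\ref{boundseq}), $L^{t+1}$ is bounded''; you have simply spelled out the triangle-inequality step that the paper leaves implicit.
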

\begin{proof}
With some algebra, we have the following equality
\begin{equation}
\begin{split}
& \mathcal{L}\left(L^{t},S^{t},Y^{t},\mu^{t}\right)\\
=&\mathcal{L}\left(L^{t},S^{t},Y^{t-1}, \mu^{t-1}\right)+\frac{\mu^t-\mu^{t-1}}{2}\|L^t-X+S^t\|_F^2\\
&+Tr[(Y^t-Y^{t-1})(L^t-X+S^t)]\\
=& \mathcal{L}\left(L^{t},S^{t},Y^{t-1}, \mu^{t-1}\right)+\frac{\mu^t+\mu^{t-1}}{2(\mu^{t-1})^2}\|Y^t-Y^{t-1}\|_F^2.
\end{split}
\end{equation}
Then, 
\begin{equation}
\begin{split}
& \mathcal{L}\left(L^{t+1},S^{t+1},Y^{t}, \mu^{t}\right)\\
\leq& \mathcal{L}(L^{t+1},S^{t}, Y^{t}, \mu^t)\\
\leq&\mathcal{L}(L^{t}, S^{t}, Y^{t}, \mu^t)\\
\leq & \mathcal{L}\left(L^{t},S^{t},Y^{t-1}, \mu^{t-1}\right)+\frac{\mu^t+\mu^{t-1}}{2(\mu^{t-1})^2}\|Y^t-Y^{t-1}\|_F^2. \label{lasteq}
\end{split}
\end{equation}
Iterating the inequality chain (\ref{lasteq}) $t$ times, we obtain
\begin{equation}
\begin{split}
&\mathcal{L}(L^{t+1}, S^{t+1}, Y^{t}, \mu^t)  \\
&\leq \mathcal{L}\left(L^1, S^{1},Y^{0}, \mu^{0}\right)+\sum_{i=1}^t\frac{\mu^i+\!\mu^{i-1}}{2(\mu^{i-1})^2}\|Y^i-Y^{i-1}\|_F^2.
\end{split}
\end{equation}
Since $\|Y^i-Y^{i-1}\|_F^2$ is bounded,  
all terms on the right-hand side of the above inequality are bounded, thus $\mathcal{L}\left(L^{t+1}, S^{t+1},Y^t, \mu^{t}\right)$ is upper bounded.

Again,
\begin{equation}
\begin{split}
\label{boundseq}
& \mathcal{L}\left(L^{t+1}, S^{t+1},Y^t, \mu^{t}\right)+\frac{1}{2\mu^t}\|Y^t\|_F^2\\
&=F(L^{t+1})+\!\lambda \|S^{t+1}\|_l+\!\frac{\mu^t}{2}\|L^{t+1}-\!X+\!S^{t+1}+\!\frac{Y^t}{\mu^t}\|_F^2.
\end{split}
\end{equation}
Since each term on the right-hand side is bounded, $S^{t+1}$ is bounded. By the last term on the right-hand of (\ref{boundseq}), $L^{t+1}$ is bounded. Therefore, $\{L^t\}$ and $\{S^t\}$ are both bounded.
\end{proof}
\begin{theorem}
Let $\{L^t, S^t, Y^t\}$ be the sequence generated in Algorithm 1 and $\{ L^*, S^*, Y^* \}$ be an accumulation point. Then $\{ L^*, S^* \}$ is a stationary point of optimization problem (\ref{originalproblem}) if $\sum\limits_{t=1}^{\infty} \frac{\mu^t+\mu^{t+1}}{(\mu)^2}<\infty$ and $\lim\limits_{t\to  \infty}\mu^t (S^{t+1} - S^{t}) \to 0$. 
\end{theorem}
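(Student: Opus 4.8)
The plan is to verify that the accumulation point $\{L^*,S^*,Y^*\}$ satisfies the first-order (KKT-type) stationarity conditions of (\ref{originalproblem}): primal feasibility $L^*+S^*=X$ together with the two subgradient inclusions $0\in\partial F(L^*)+Y^*$ and $0\in\partial(\lambda\|S^*\|_l)+Y^*$. Since Lemmas \ref{lemma11} and \ref{lemma12} guarantee that $\{L^t\}$, $\{S^t\}$ and $\{Y^t\}$ are bounded, the Bolzano--Weierstrass theorem yields a convergent subsequence $\{L^{t_j},S^{t_j},Y^{t_j}\}\to\{L^*,S^*,Y^*\}$, and all limits below are taken along this subsequence. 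First I would establish feasibility: rearranging the multiplier update (\ref{multi}) gives $L^{t+1}-X+S^{t+1}=(Y^{t+1}-Y^t)/\mu^t$, and because $\{Y^t\}$ is bounded while $\mu^t=\rho^t\mu^0\to\infty$, the right-hand side tends to $0$, so in the limit $L^*+S^*-X=0$. The hypothesis $\mu^t(S^{t+1}-S^t)\to0$ forces $S^{t+1}-S^t\to0$ (since $\mu^t\to\infty$), and combined with feasibility this shows consecutive iterates share the same limit, which I use when matching indices $t_j$ and $t_j+1$.

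Next I would derive the $S$-stationarity condition directly from (\ref{bound1}), which reads $0\in\partial_S(\lambda\|S\|_l)|_{S^{t+1}}+Y^{t+1}$. Passing to the limit and invoking the closedness (outer semicontinuity) of the convex subdifferential mapping $S\mapsto\partial(\lambda\|S\|_l)$ gives $0\in\partial(\lambda\|S^*\|_l)+Y^*$. For the $L$-stationarity condition I would write the first-order necessary optimality condition of the $L$-subproblem (\ref{upZ}), namely $0\in\partial F(L^{t+1})+\mu^t(L^{t+1}-X+S^t)+Y^t$, and then re-express the penalty term by means of (\ref{multi}) as $\mu^t(L^{t+1}-X+S^t)+Y^t=Y^{t+1}-\mu^t(S^{t+1}-S^t)$. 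Substituting yields $0\in\partial F(L^{t+1})+Y^{t+1}-\mu^t(S^{t+1}-S^t)$; the extra term vanishes by the hypothesis $\lim_{t\to\infty}\mu^t(S^{t+1}-S^t)\to0$, and passing to the limit produces $0\in\partial F(L^*)+Y^*$. Together with the feasibility established above, this is exactly the stationarity of $\{L^*,S^*\}$.

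The hard part will be passing the $L$-subgradient inclusion to the limit, because $F=\|\cdot\|_\gamma$ is nonconvex, so $\partial F$ must be interpreted as the limiting (or Clarke) subdifferential rather than the convex one, and $L^{t+1}$ is only a first-order stationary point of the (DC-solved) subproblem, not a global minimizer. The argument therefore hinges on the fact that $F$ is locally Lipschitz---being a unitarily invariant function of the singular values, which are $1$-Lipschitz in the matrix entries---and that its limiting subdifferential is outer semicontinuous: if $\xi^{t_j}\in\partial F(L^{t_j+1})$ with $\xi^{t_j}\to\xi$ and $L^{t_j+1}\to L^*$, then $\xi\in\partial F(L^*)$. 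I would verify that the chosen subgradients $\xi^{t_j}=-\bigl(Y^{t_j+1}-\mu^{t_j}(S^{t_j+1}-S^{t_j})\bigr)$ form a bounded, convergent sequence so that this closedness applies. This is precisely where the two technical hypotheses on $\mu^t$ are indispensable: the summability condition keeps the multiplier increments under control (inherited from Lemma \ref{lemma12}), while $\mu^t(S^{t+1}-S^t)\to0$ kills the residual penalty contribution and makes the subgradient sequence converge, so that the limiting inclusion is legitimate.
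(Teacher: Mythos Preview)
Your proposal is correct and follows the same outline as the paper: boundedness via Lemmas~\ref{lemma11}--\ref{lemma12}, Bolzano--Weierstrass, primal feasibility from the multiplier update $L^{t+1}+S^{t+1}-X=(Y^{t+1}-Y^t)/\mu^t\to 0$, and the first-order conditions of the two subproblems rewritten using (\ref{multi}) so that the residual $\mu^t(S^{t+1}-S^t)$ appears and is then killed by hypothesis. You are also more explicit than the paper about the $S$-stationarity condition, which the paper leaves implicit in (\ref{bound1}) from Lemma~\ref{lemma11}.

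The one substantive difference is how the $L$-subgradient is passed to the limit. The paper invokes Theorem~\ref{lewis} (Lewis's spectral gradient formula) to write $\partial_L F(L^{t+1})=U\,\mathrm{diag}(\theta)\,V^T$ explicitly, with $\theta_i=(1+\gamma)\gamma/(\gamma+\sigma_i)^2\in(0,(1+\gamma)/\gamma]$ uniformly bounded, and then takes the limit of this concrete object. You instead appeal to local Lipschitz continuity of $F$ and outer semicontinuity of the limiting/Clarke subdifferential. Your route is more general---it would work for any locally Lipschitz spectral function without recomputing derivatives---but imports heavier nonsmooth-analysis machinery; the paper's route is elementary and self-contained, at the cost of being specific to the $\gamma$-norm. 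Both arrive at $-Y^*\in\partial F(L^*)$ and hence at stationarity of $\{L^*,S^*\}$.
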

\begin{proof}
The sequence $\{L^t, S^t, Y^t\}$ generated in Algorithm 1 is bounded as shown in Lemmas
\ref{lemma11} and \ref{lemma12}. By Bolzano-Weierstrass theorem, the sequence must have at least one accumulation point, e.g., $\{L^*, S^*, Y^*\}$. Without loss of generality, we assume that 
$\{  L^t, S^t, Y^t\}$ itself converges to $\{L^*, S^*,Y^*\}$.

Since $L^t+S^t-X=\frac{Y^t-Y^{t-1}}{\mu^{t-1}}$, we have $\lim\limits_{t\to  \infty}L^t+S^t-X \rightarrow 0$. Then $X=L^*+S^*$. Thus the primal feasibility condition is satisfied.

For $L^{t+1}$, it is true that
\begin{equation}
\begin{split}
\label{bound1}
&\partial_L \mathcal{L}\left(L, S^t, Y^t, \mu^{t}\right)|_{L^{t+1}}\\
=&\partial_L F\left(L\right)|_{L^{t+1}}+Y^t+\mu^{t}\left(L^{t+1}-X+S^t\right)\\
=&\partial_L F\left(L\right)|_{L^{t+1}}+Y^{t+1}-\mu^t(S^{t+1}-S^{t})\\
=&0.
\end{split}
\end{equation}
If the singular value decomposition of $L$ is $U diag(\sigma_i)V^T$, according to Theorem \ref{lewis} in Appendix B, 
\begin{equation}
\begin{split}
\partial_L F\left(L\right)|_{L^{t+1}}=Udiag(\theta)V^T,
\end{split}
\end{equation}
where $\theta_i=\frac{(1+\gamma)\gamma}{(\gamma+\sigma_i)^2}$ if $\sigma_i \neq 0$; otherwise, it is $(1+\gamma)/\gamma$.
Since $\theta_i\in (0, \frac{1+\gamma}{\gamma}]$ is finite, 
$\partial_L F\left(L\right)|_{L^{t+1}}$ is bounded. Since $\{Y^t\}$ is bounded, $\lim\limits_{t\to \infty} \mu^t ( S^{t+1} - S^{t})$ is bounded. 
Under the assumption that  $\lim\limits_{t\to \infty} \mu^t ( S^{t+1} - S^{t}) \to 0$ \cite{nie2014new}, 
\begin{equation*}
\partial_L F\left(L^*\right)+Y^*=0.
\end{equation*} 
Hence, $\{L^*, S^*, Y^*\}$ satisfies the KKT conditions of $\mathcal{L} (L, S, Y)$. Thus $\{L^*, S^*\}$ is a stationary point of (\ref{originalproblem}). 
\end{proof}

\section{Experiments}
In this section, we evaluate our algorithm by deploying it to three important practical applications: foreground-background separation on surveillance video, shadow removal from face images, and anomaly detection. All applications involve the recovery of intrinsically low-dimensional data from gross corruption. We compare our algorithm with other state-of-the-art methods, including convex RPCA \cite{candes2011robust}, CNorm \cite{sun2013robust}, and AltProj \cite{netrapalli2014non}. All these three methods call PROPACK package \cite{ding2011bayesian}. In addition, for convex RPCA \cite{wright2009robust}, we use the state-of-the-art solver, viz., an inexact augmented Lagrange multiplier (IALM) method \cite{candes2011robust}, \cite{lin2010augmented}. For CNorm, we use the fast alternating algorithm and the convex relaxation solutions from NSA \cite{aybat2011fast} as its initial conditions. We perform all experiments with Matlab in Windows 7 based on Intel Xeon 2.33GHz CPU with 4G RAM\footnote{The code is available at: https://github.com/sckangz/noncvx-PRCA}. 

\subsection{Parameter setting}
There are three parameters in our model: $\lambda$, $\rho$, and $\mu$. If $\lambda$ is too large, the trivial solution of $S=0$ is obtained, which generates $L$ with high rank. On the other hand, small $\lambda$ leads to $L=0$. Similar to \cite{candes2011robust}, $\lambda$ can be selected from the neighborhood of $1/\sqrt{\max\{m, n\}}$. Experiments show that our results are insensitive to $\lambda$ in a pretty broad range, so we just set $\lambda=10^{-3}$ through all our experiments. As for $\rho$, a large value will lead to fast convergence, while a small value of $\rho$ will result in a more accurate solution. In the literature, a often used value is 1.1. As discussed in \cite{leow2013background}, $\mu^0$ can also affect $L$ in IALM. If $\mu^0$ is too large, $L$ will have a rank larger than the desired low rank. This provides a way to manipulate the desired rank. By this principle, we tune the value of $\mu^0$. Finally, we set $\mu^0$ to $10^{-4}$, $3\times 10^{-3}$, $0.5$ and 4 for the following four experiments, respectively. In practice, these parameters can be chosen by cross validation. For fair comparison, we follow the experimental setting in AltProj \cite{netrapalli2014non} and stop the program when a relative error $\|X-L-S\|_F/\|X\|_F$ of $10^{-3}$ is reached. For other methods, we follow the parameter settings used in the corresponding papers. 
\begin{figure*}[t]
\centering
\begin{tabular}{cccc}
\hspace*{-7pt}\includegraphics[width=0.22\textwidth]{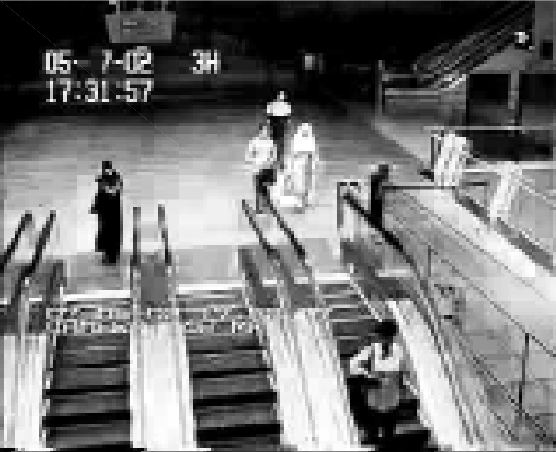}&
\hspace*{5pt}\includegraphics[width=0.22\textwidth]{elevator_X1}&
\hspace*{5pt}\includegraphics[width=0.22\textwidth]{elevator_X1}&
\hspace*{5pt}\includegraphics[width=0.22\textwidth]{elevator_X1}\\
\hspace*{-7pt}\includegraphics[width=0.22\textwidth]{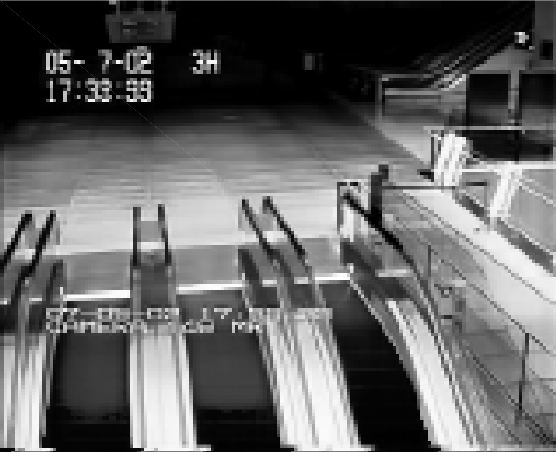}&
\hspace*{5pt}\includegraphics[width=0.22\textwidth]{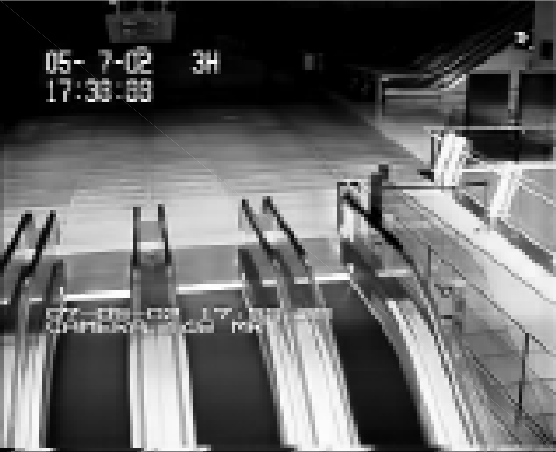}&
\hspace*{5pt}\includegraphics[width=0.22\textwidth]{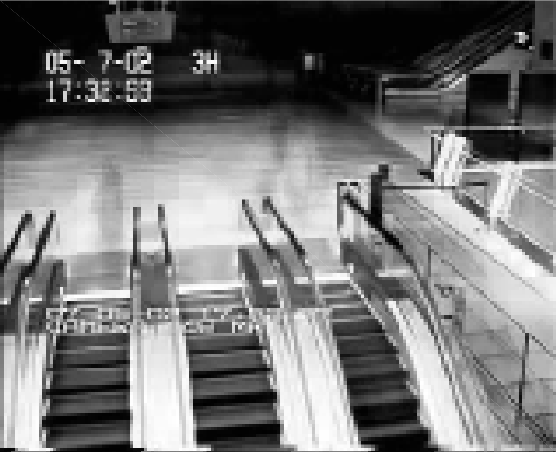}&
\hspace*{5pt}\includegraphics[width=0.22\textwidth]{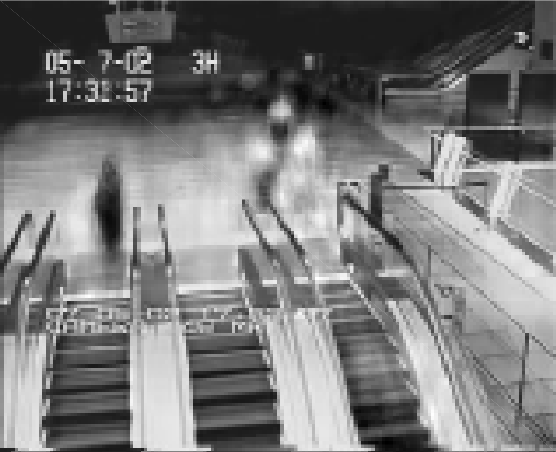}\\
\hspace*{-7pt}\includegraphics[width=0.22\textwidth]{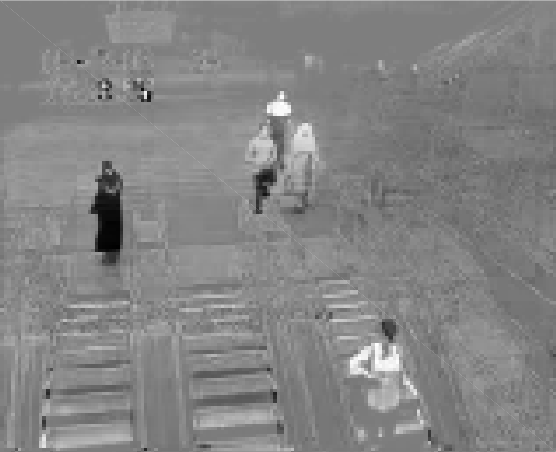}&
\hspace*{5pt}\includegraphics[width=0.22\textwidth]{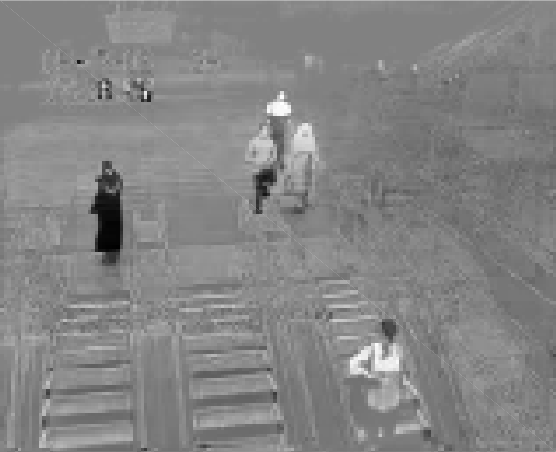}&
\hspace*{5pt}\includegraphics[width=0.22\textwidth]{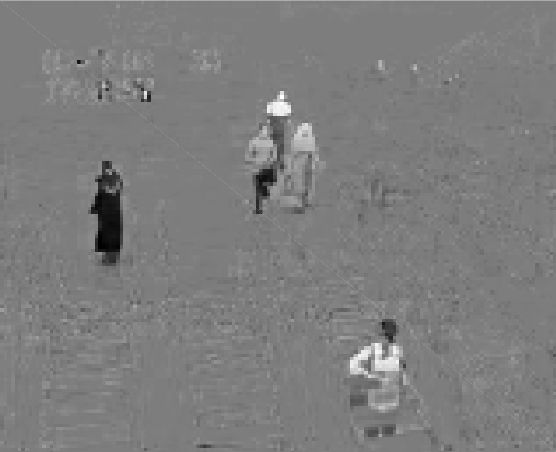}&
\hspace*{5pt}\includegraphics[width=0.22\textwidth]{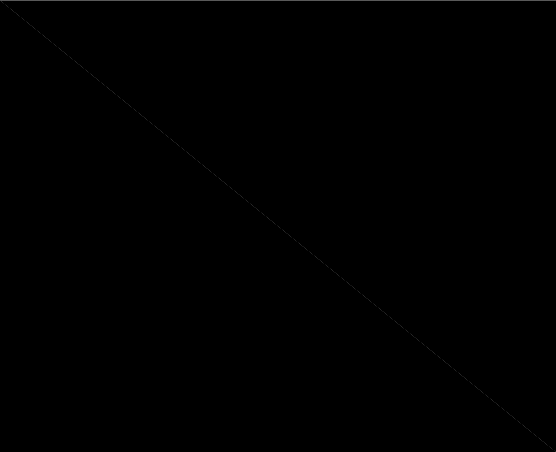}\\
(a) AltProj&(b) Our&(c) IALM&(d) CNorm
\end{tabular}\vspace*{-5pt}
\caption{Foreground-background separation in the escalator video. The three rows from top to bottom are original image frame, static background, and dynamic foreground, respectively.}
\label{elevator}
\end{figure*}
\subsection{Video background subtraction}
Background subtraction from video sequences is a popular approach to detecting interesting activities in the scene. Surveillance videos from a fixed camera can be naturally modeled by our model due to their relatively static background and sparse foreground. 
\subsubsection{First experiment scenario}
In this experiment, we use a benchmark data set escalator \cite{li2004statistical}, which contains 3,417 frames of size 160 $\times$ 130. The data matrix $X$ is formed by vectorizing each frame and concatenating the vectors column-wisely. As a result, the size of $X$ is $20,800\times 3,417$. For this data set, the background appears to be completely static, thus the ideal rank of the background matrix is one.

\begin{table}[htbp]
\renewcommand{\arraystretch}{1.3}

\caption{Recovery results of escalator video}
\centering
\begin{tabular}{|c||c|c|c|}
\hline
Algorithm & Rank($L$) &$\frac{\|X-L-S\|_F}{\|X\|_F}$&Time (s)\\
\hline
AltProj & 1& 6.35e-4&537\\
\hline
CNorm & 131& 1.00e-1&1015\\
\hline
IALM& 2011&9.50e-8&11315\\
\hline
Our & 1& 5.45e-4&208\\
\hline
\end{tabular}
\label{videoresult}
\end{table}

For escalator, unfortunately, CNorm cannot successfully separate the foreground from background with our current stopping criterion. Thus we relax its terminating relative error to be $0.1$. For AltProj, we set the desired rank of $L^*$ to be one. The low rank ground truth is not available for
these videos so we present a visual comparison of extraction results using different algorithms in Figure \ref{elevator}. As we can see, CNorm suffers from noticeable artifacts (shadows of people), which is due to overfitting in the low rank component. $S$ is missing since the sparse component is absorbed by the big relative error.  Blur exists in IALM recovery image, which is also observed in many other work \cite{netrapalli2014non}, \cite{babacan2012sparse}. In contrast, both AltProj and our method obtain a clean background. Moreover, the steps of the escalator are also removed by these two methods, since they are moving and are supposed to be part of the dynamic foreground. 

Table \ref{videoresult} gives the quantitative comparison results. In terms of computing time, our method is more than twice faster than AltProj, and 54 times faster than IALM\footnote{The experiments in \cite{netrapalli2014non} are conducted on a machine with Dual 8-core Xeon (E5-2650) 2.0GHz CPU with 192GB RAM \cite{netrapalli2014nonarxiv}. }. One intuitive interpretation is that we observe that fewer iterations are required for our algorithm to converge. Therefore our method is efficient even when the matrix size is large. Furthermore, both AltProj and our algorithm can obtain the desired rank-one matrix $L$. The nuclear norm based IALM results in $L$ with rank 2011, which implies that $L$ contains many blurred images. If we increase the size of data matrix $X$, IALM performs even worse since it may incur errors from rank approximation. These results emphasize the significance of good rank approximation.  
\begin{figure*}[ht]
\centering
\subfigure[ AltProj]
{
\begin{minipage}{.32\textwidth}
\hspace*{-5pt}\includegraphics[width=0.45\textwidth]{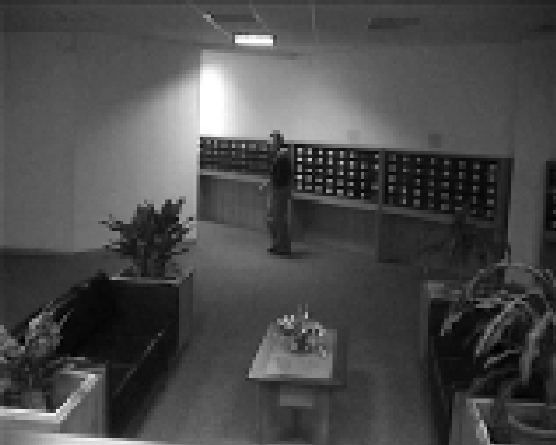}
\vspace{1mm}
\hspace*{1pt}\includegraphics[width=0.45\textwidth]{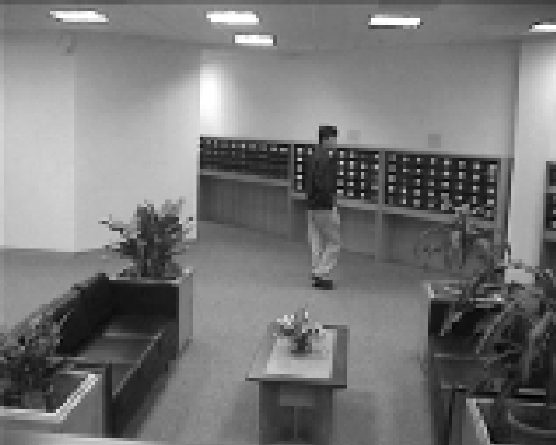}\\
\hspace*{-5pt}\includegraphics[width=0.45\textwidth]{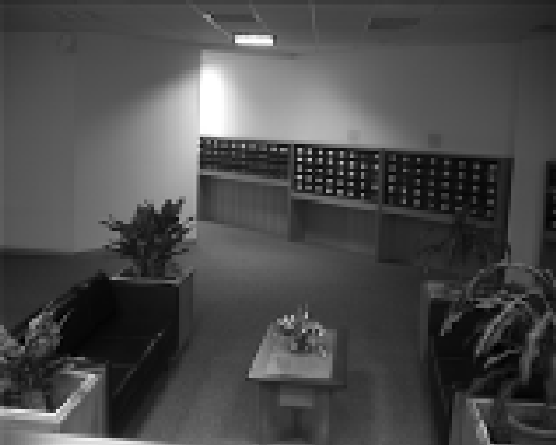}
\vspace{1mm}
\hspace*{1pt}\includegraphics[width=0.45\textwidth]{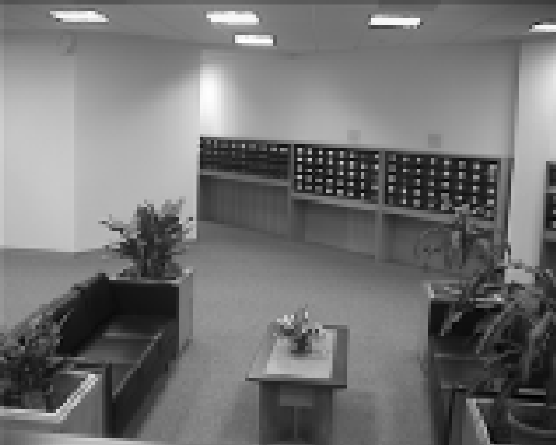}\\
\hspace*{-5pt}\includegraphics[width=0.45\textwidth]{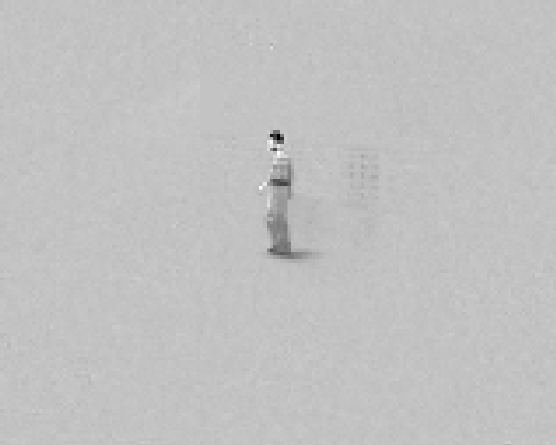}
\vspace{1mm}
\hspace*{1pt}\includegraphics[width=0.45\textwidth]{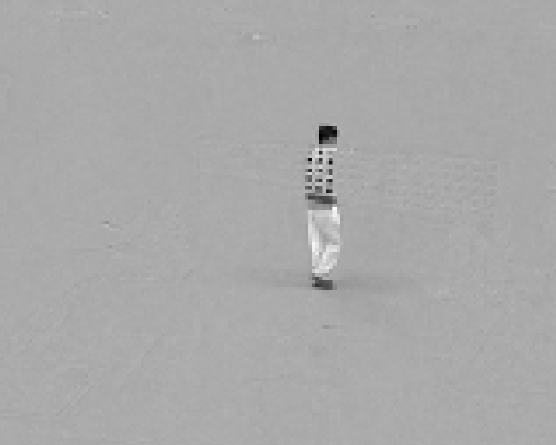}
\end{minipage}
}
\subfigure[ IALM]
{\begin{minipage}{.32\textwidth}
\hspace*{-5pt}\includegraphics[width=0.45\textwidth]{lobby_1010}
\vspace{1mm}
\hspace*{1pt}\includegraphics[width=0.45\textwidth]{lobby_1371}\\
\hspace*{-5pt}\includegraphics[width=0.45\textwidth]{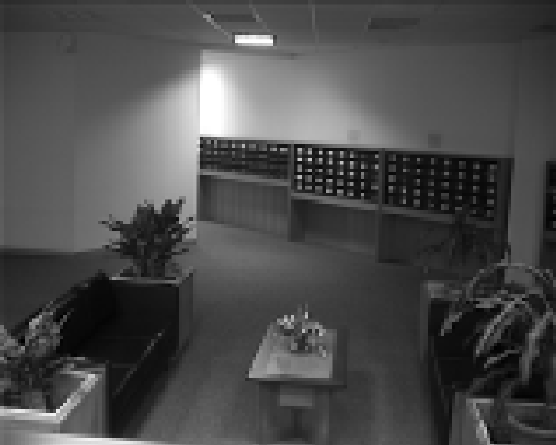}
\vspace{1mm}
\hspace*{1pt}\includegraphics[width=0.45\textwidth]{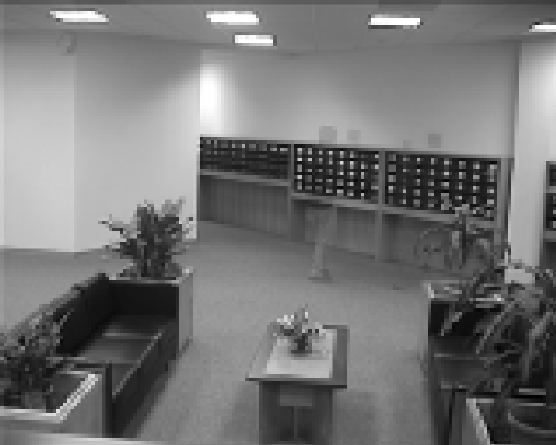}\\
\hspace*{-5pt}\includegraphics[width=0.45\textwidth]{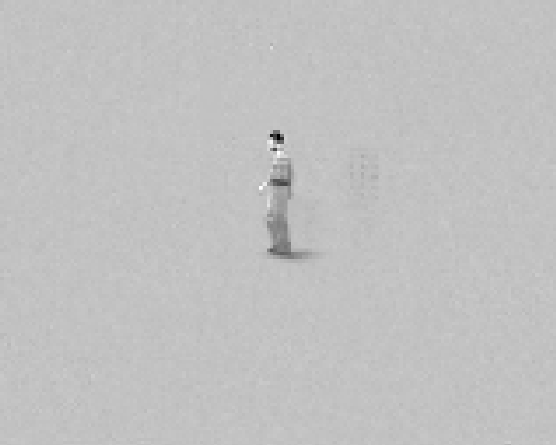}
\vspace{1mm}
\hspace*{1pt}\includegraphics[width=0.45\textwidth]{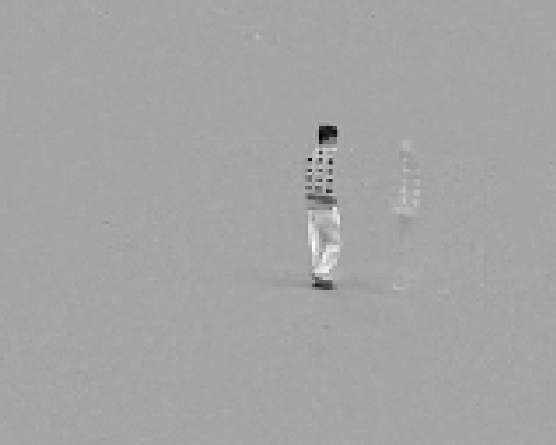}
\end{minipage}
}
\subfigure[ Our]
{\begin{minipage}[]{.32\textwidth}
\hspace*{-5pt}\includegraphics[width=0.45\textwidth]{lobby_1010}
\vspace{1mm}
\hspace*{1pt}\includegraphics[width=0.45\textwidth]{lobby_1371}\\
\hspace*{-5pt}\includegraphics[width=0.45\textwidth]{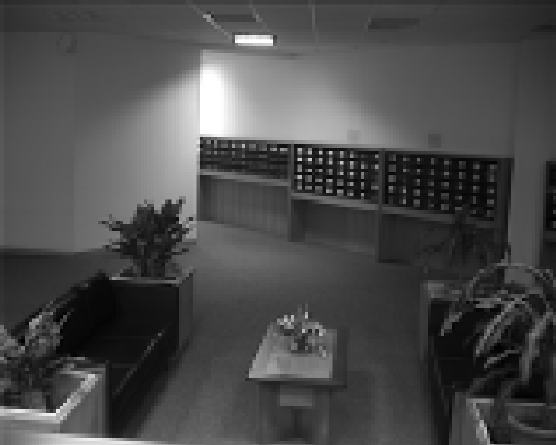}
\vspace{1mm}
\hspace*{1pt}\includegraphics[width=0.45\textwidth]{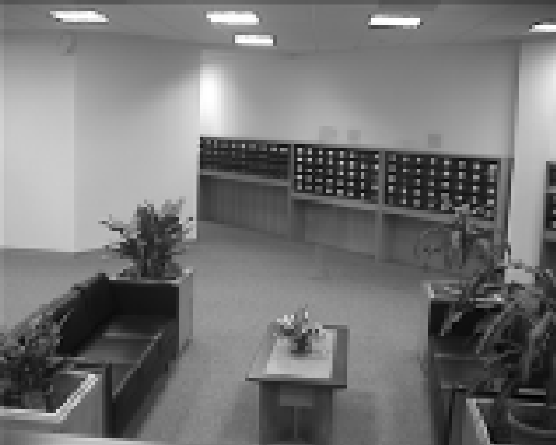}\\
\hspace*{-5pt}\includegraphics[width=0.45\textwidth]{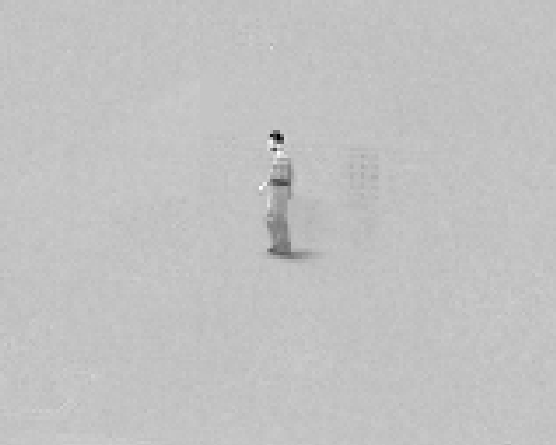}
\vspace{1mm}
\hspace*{1pt}\includegraphics[width=0.45\textwidth]{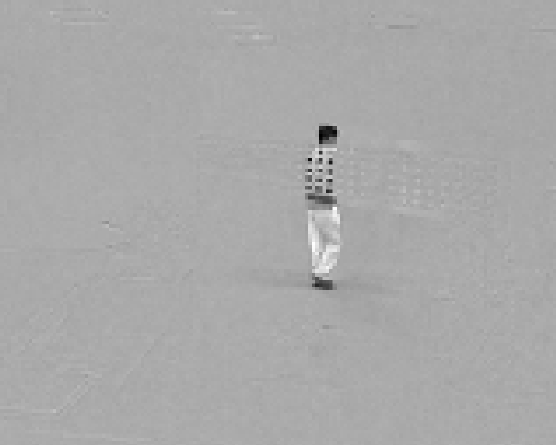}
\end{minipage}
}
\caption{Foreground-background separation in the lobby video. The three rows from top to bottom are original image frame,  background, and dynamic foreground, respectively.}
\label{lobby}
\end{figure*}
\subsubsection{Second experiment scenario}

\begin{table}[htbp]
\renewcommand{\arraystretch}{1.3}
\caption{Recovery results of lobby video}
\label{lobbyresult}
\centering
\begin{tabular}{|c||c|c|c|}
\hline
Algorithm & Rank($L$) &$\frac{\|X-L-S\|_F}{\|X\|_F}$&Time (s)\\
\hline
AltProj & 2& 1.88e-4&203\\
\hline
IALM& 259&9.59e-4&1187\\
\hline
Our & 2& 1.95e-5&46\\
\hline
\end{tabular}
\end{table}
The purpose of this experiment is to demonstrate the effectiveness of our algorithm on dynamic backgrounds. In some cases, the background changes over time due to, e.g., illumination variation or weather change. Then the background can have a higher rank. Here we use sequences captured from a lobby. The size of $X$ is $20,480 \times 1,546$. On this occasion, background changes are mainly caused by switching on/off lights. Therefore, we expect the rank of $L$ to be 2. Again, we set the desired rank in Altproj to be 2. Two examples are shown for each method in Figure \ref{lobby}. The first example denotes the scene before the other three lights are turned on. In this example, except for some shadows of pants in one image recovered by IALM, all the recovered background images appear satisfactory.

We list the numerical results in Table \ref{lobbyresult}. For this experiment, our algorithm is almost five times faster than AltProj, 26 times faster than IALM. It is also noted that, the rank obtained from IALM is still high.   
\begin{figure*}[htbp]
\centering
\begin{tabular}{cccccccc}
\hspace*{-7pt}\includegraphics[width=0.18\textwidth]{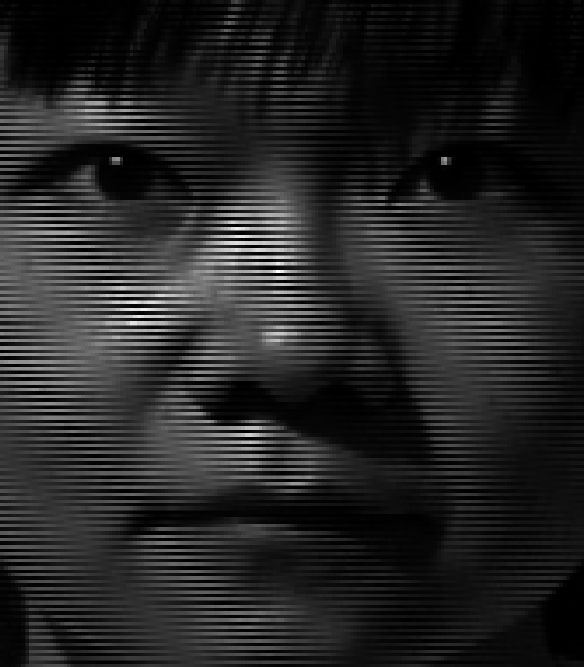}&
\hspace*{5pt}\includegraphics[width=0.18\textwidth]{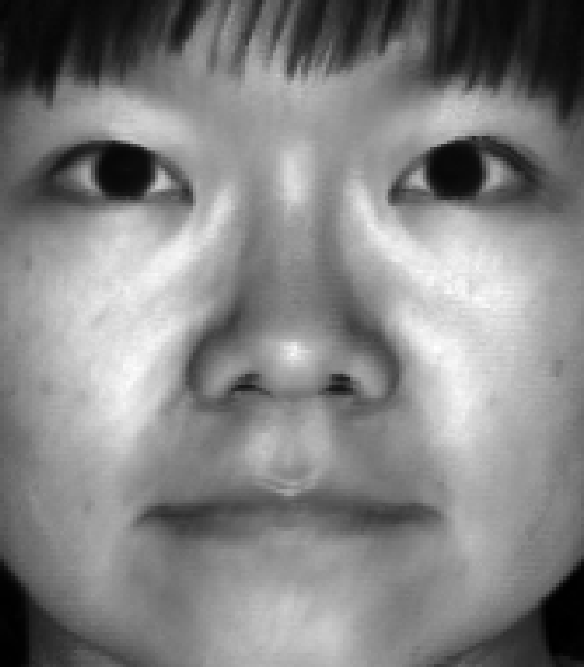}&
&
\hspace*{-5pt}\includegraphics[width=0.18\textwidth]{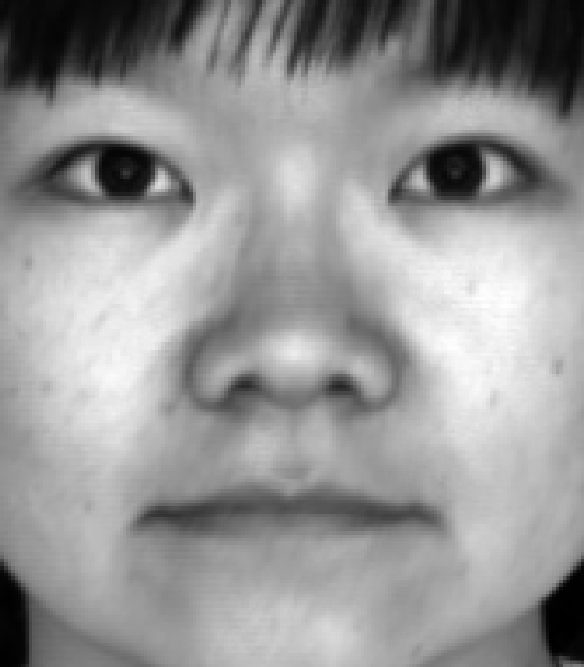}&
&
\hspace*{-5pt}\includegraphics[width=0.18\textwidth]{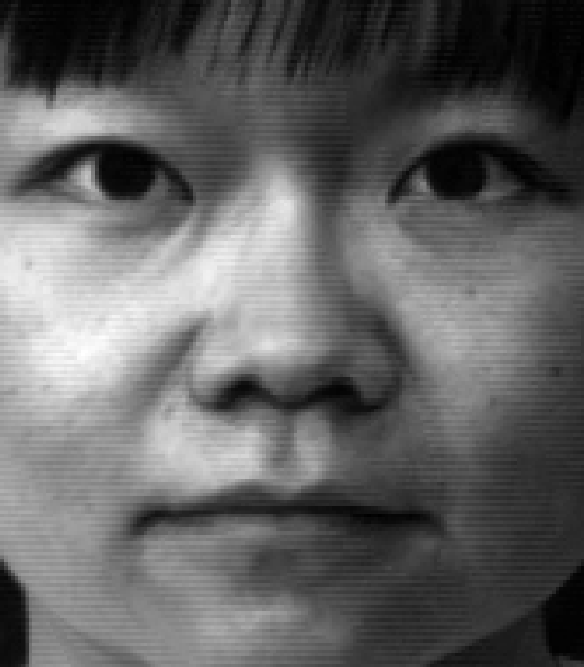}&
&
\hspace*{-5pt}\includegraphics[width=0.18\textwidth]{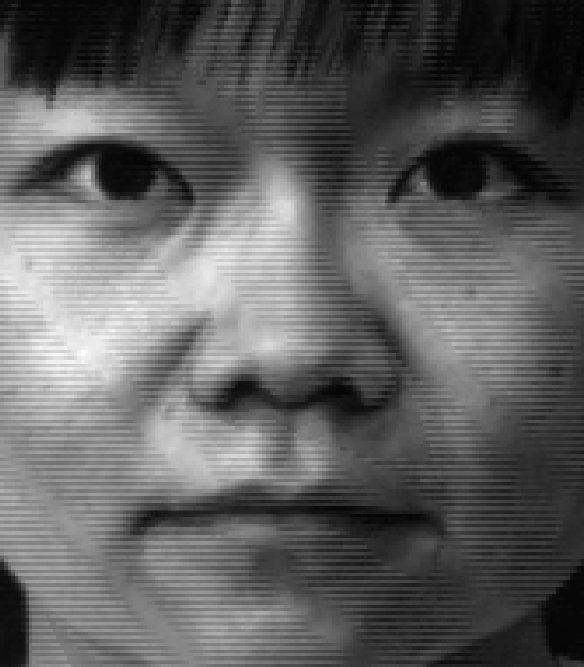}\\
&
\hspace*{5pt}\includegraphics[width=0.18\textwidth]{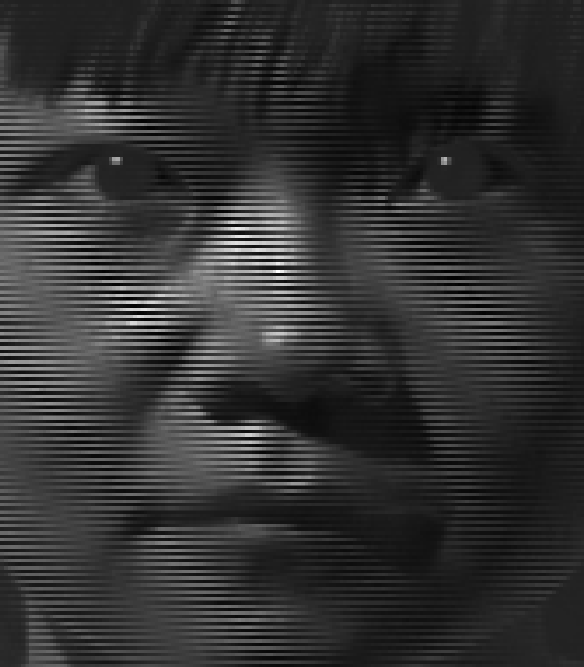}&
&
\hspace*{-5pt}\includegraphics[width=0.18\textwidth]{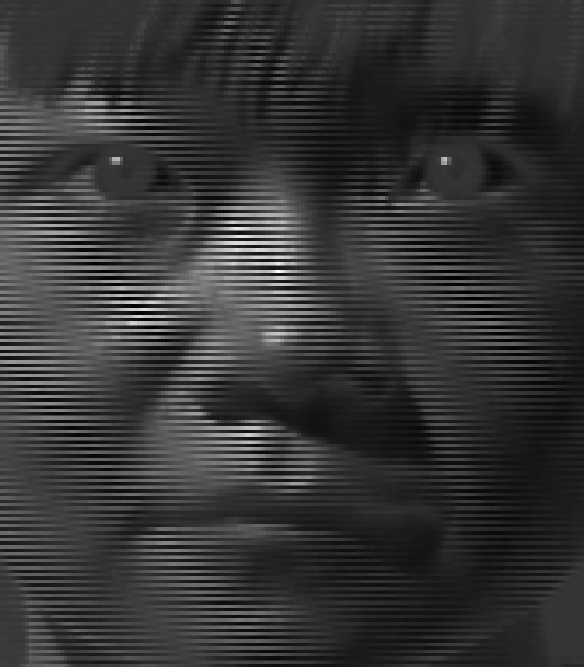}&
&
\hspace*{-5pt}\includegraphics[width=0.18\textwidth]{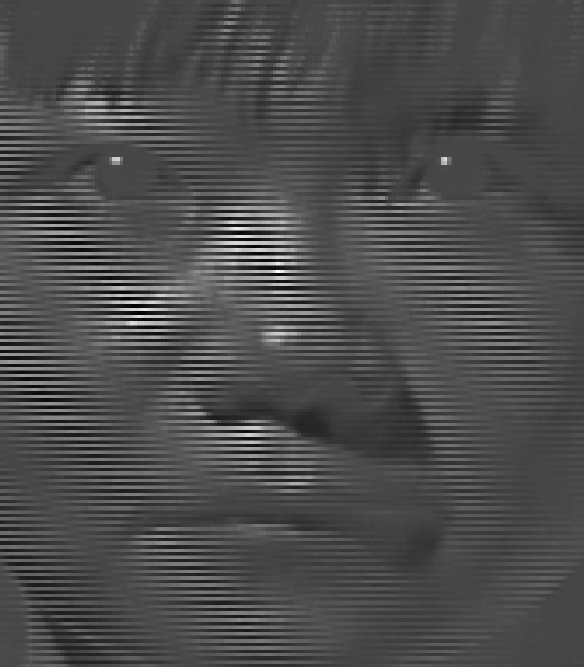}&
&
\hspace*{-5pt}\includegraphics[width=0.18\textwidth]{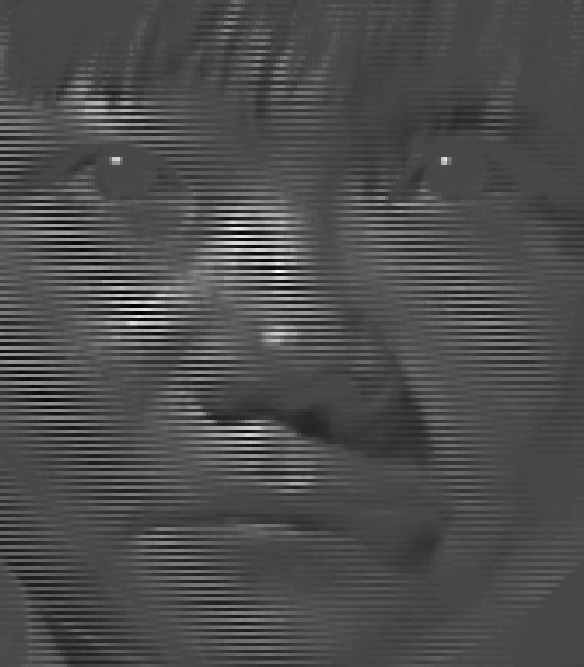}\\
\vspace*{5pt}\\
\hspace*{-7pt}\includegraphics[width=0.18\textwidth]{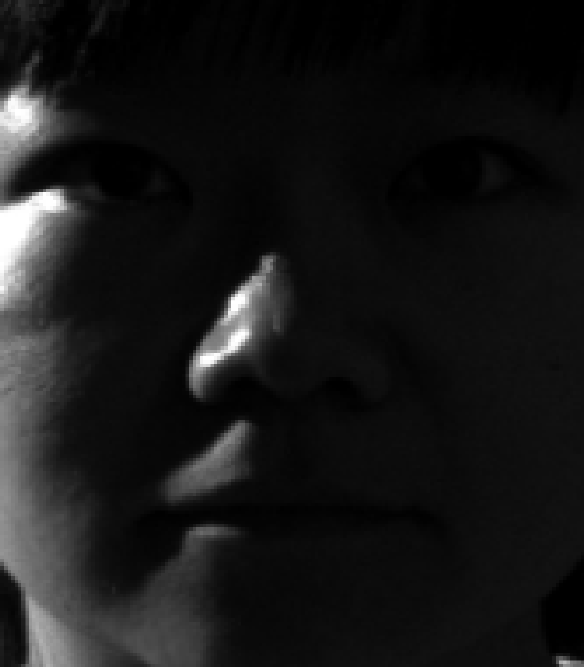}&
\hspace*{5pt}\includegraphics[width=0.18\textwidth]{subject5_ncx30}&
&
\hspace*{-5pt}\includegraphics[width=0.18\textwidth]{subject5_our}&
&
\hspace*{-5pt}\includegraphics[width=0.18\textwidth]{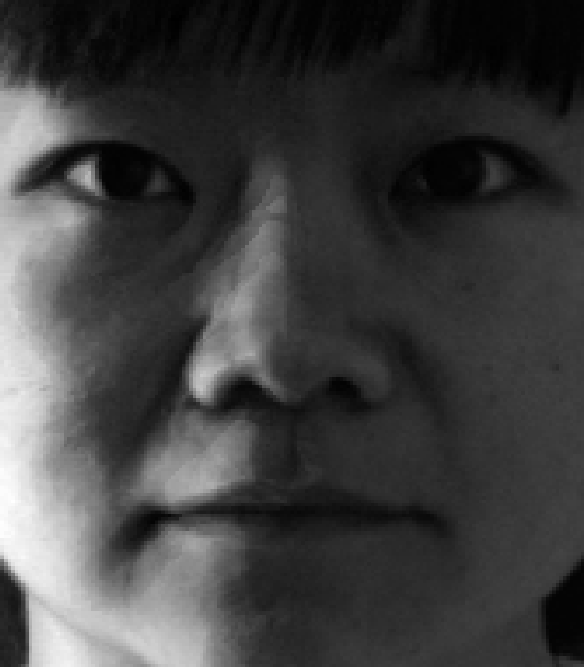}&
&
\hspace*{-5pt}\includegraphics[width=0.18\textwidth]{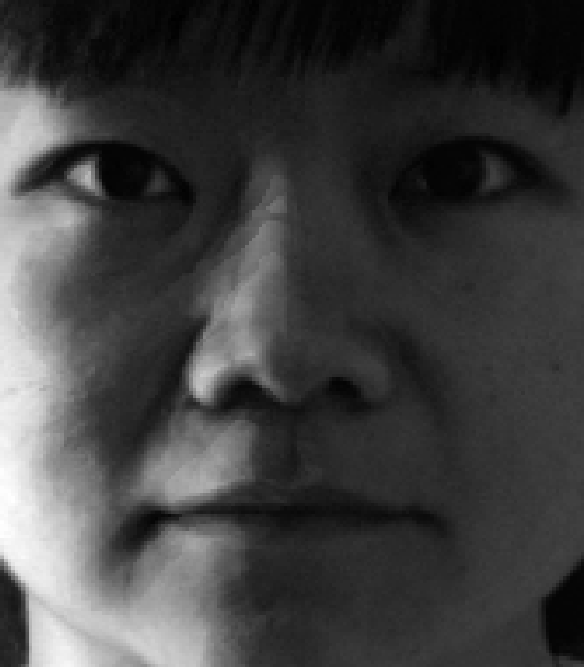}\\
&
\hspace*{5pt}\includegraphics[width=0.18\textwidth]{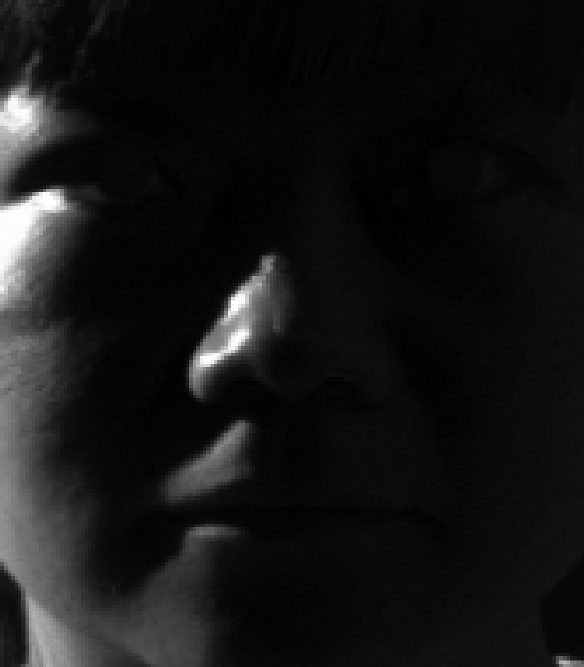}&
&
\hspace*{-5pt}\includegraphics[width=0.18\textwidth]{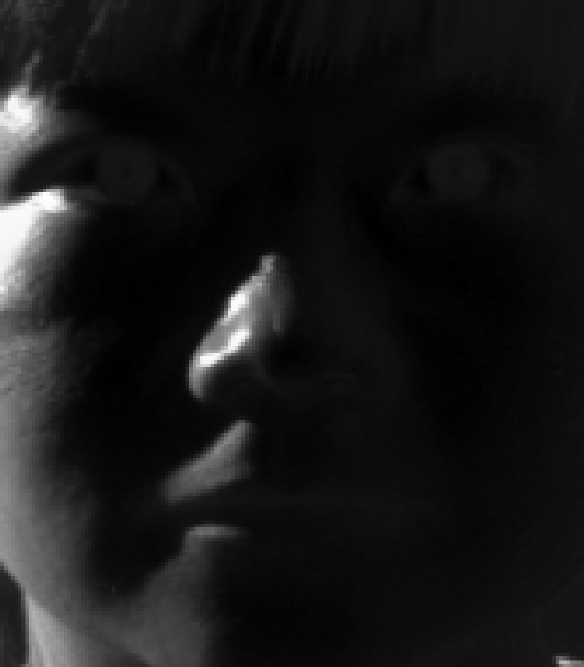}&
&
\hspace*{-5pt}\includegraphics[width=0.18\textwidth]{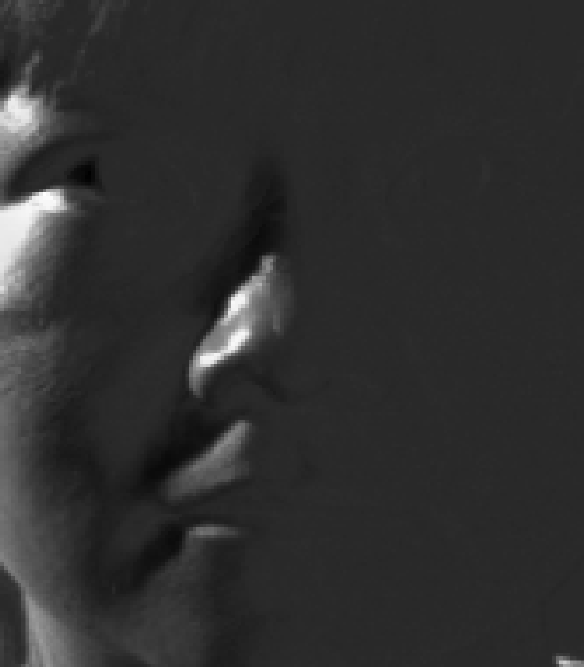}&
&
\hspace*{-5pt}\includegraphics[width=0.18\textwidth]{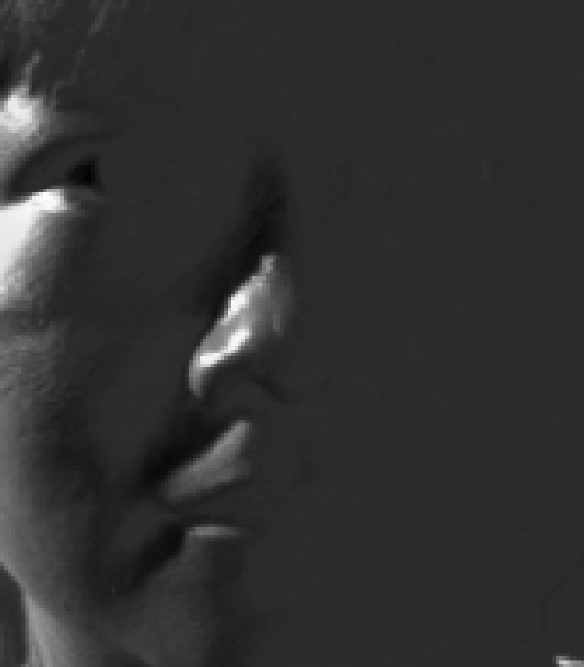}\\
 (a) Original images& (b) AltProj&&(c) Our&&(d) IALM&&(e) CNorm
\end{tabular}\vspace*{-5pt}
\caption{Shadow removal from face images. Column 1 displays sample images 17 and 30 from Subject05 of the Yale B database. Columns 2 to 5 show their low rank approximation obtained by different algorithms. Rows 2 and 4 are corresponding sparse components.}
\label{face}
\end{figure*}

\subsection{Face image shadow removal}
Removing shadows, specularities, and saturations from face images is another important application of RPCA \cite{candes2011robust}. Face images taken under different lighting conditions often introduce errors to face recognition \cite{basri2003lambertian}. 
These errors might be large in magnitude, but are supposed to be sparse in the spatial domain. Given enough face images of the same person, it is possible to reconstruct the true face image. 

We use images from the Extended Yale B database \cite{lee2005acquiring}. There are 38 subjects and each subject has 64 images of size $192\times168$ taken under varying different illuminations. Images of each subject are heavily corrupted due to different illumination conditions. All images are converted to 32,256-dimensional column vectors, hence $X\in\mathcal{R}^{32256\times64}$ for each subject. Since the images are well aligned, $L$ should have a rank of 1.

%
%

\begin{table}[htbp]
\renewcommand{\arraystretch}{1.3}
\caption{Extended Yale B Face images recovery results}
\centering
\begin{tabular}{|c||c|c|c|}

\hline
Algorithm & Rank($L$) &$\frac{\|X-L-S\|_F}{\|X\|_F}$&Time (s)\\
\hline
AltProj & 1& 4.88e-4&22\\
\hline
CNorm & 26& 1.00e-3&138\\
\hline
IALM& 32&6.40e-4& 9\\
\hline
Our & 1& 3.07e-5&0.5\\
\hline
\end{tabular}
\label{faceresult}
\end{table}
%
\begin{figure}[htbp]
\includegraphics[width=.45\textwidth]{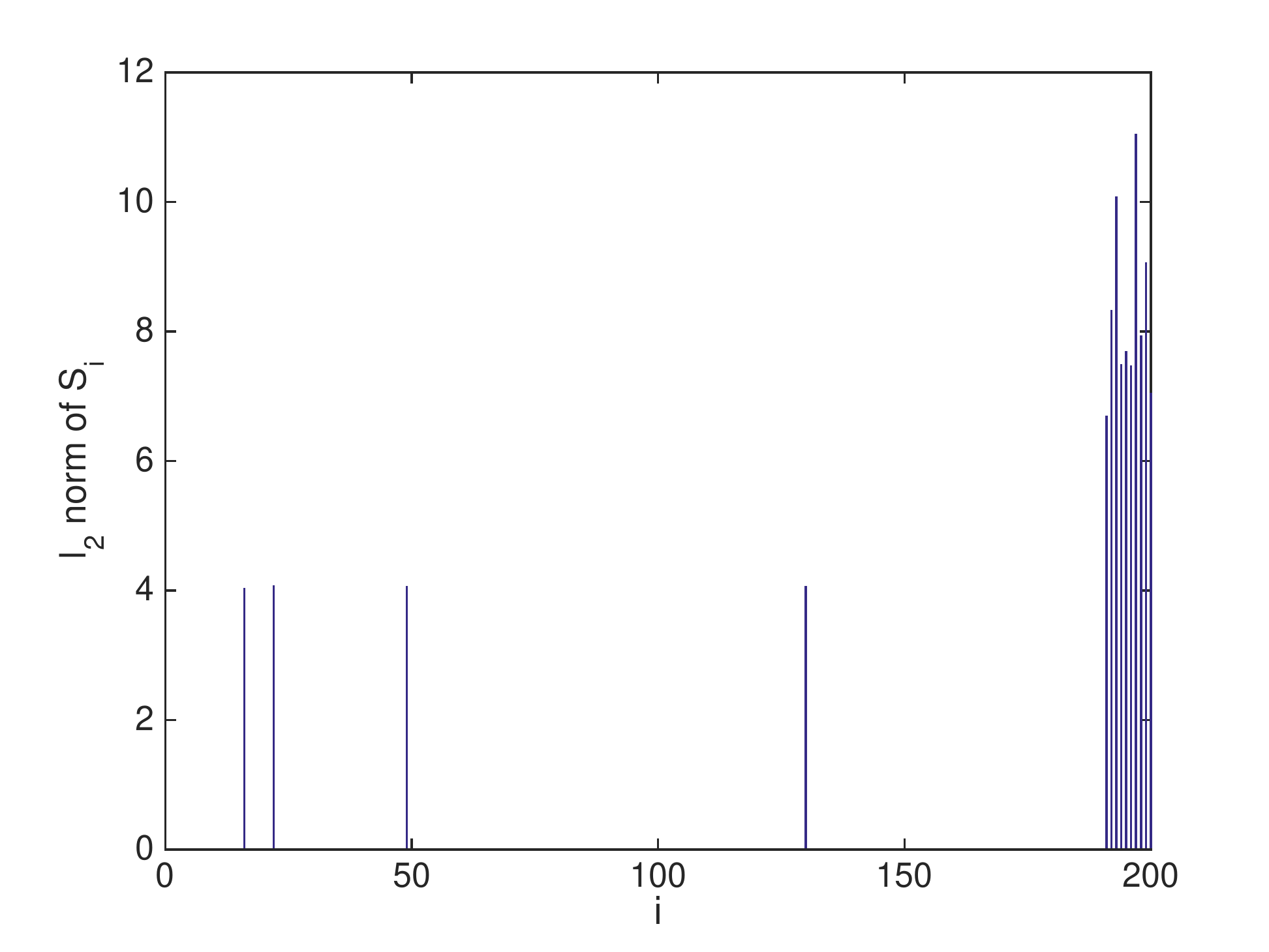}
\caption{$\ell_2$ norm of each of the 200 columns of $S$.}
\label{bar}
\end{figure}

Figure \ref{face} illustrates the results of different algorithms on two images. The proposed algorithm removes the specularities and shadows well, while there are some artifacts left by using IALM and CNorm. Although the visual qualities are similar for AltProj and our method, the numerical measurements in Table \ref{faceresult} demonstrate that our algorithm is 22 times faster than AltProj. Similar to the results in \cite{sun2013robust}, IALM and CNorm result in $L$ of high ranks.
\subsection{Anomaly Detection}
It is widely known that images from the same subject reside in a low-dimensional subspace. If we inject some images of  different subjects into a dominant number of images of the same subject, they will stand out as outliers or anomalies. To test this, we use images from USPS data set \cite{rasmussengaussian}. We choose digits '1' and '7', since they share some similarities. And we treat each 16$\times$16 image as a 256-dimensional column vector. Then the data matrix $X$ is constructed to contain the first 190 samples from digit '1' and the last 10 samples from '7'. Our goal is to identify all anomalies, including all '7's, in an unsupervised way. We apply our model to estimate $L$ and $S$, which are expected to capture '1's and '7's, respectively. $\ell_2$ norm of each column in $S$ is used to identify anomalies. Ideally, '7's should
give larger values than '1's.

\begin{figure}[htbp]
\includegraphics[width=.45\textwidth]{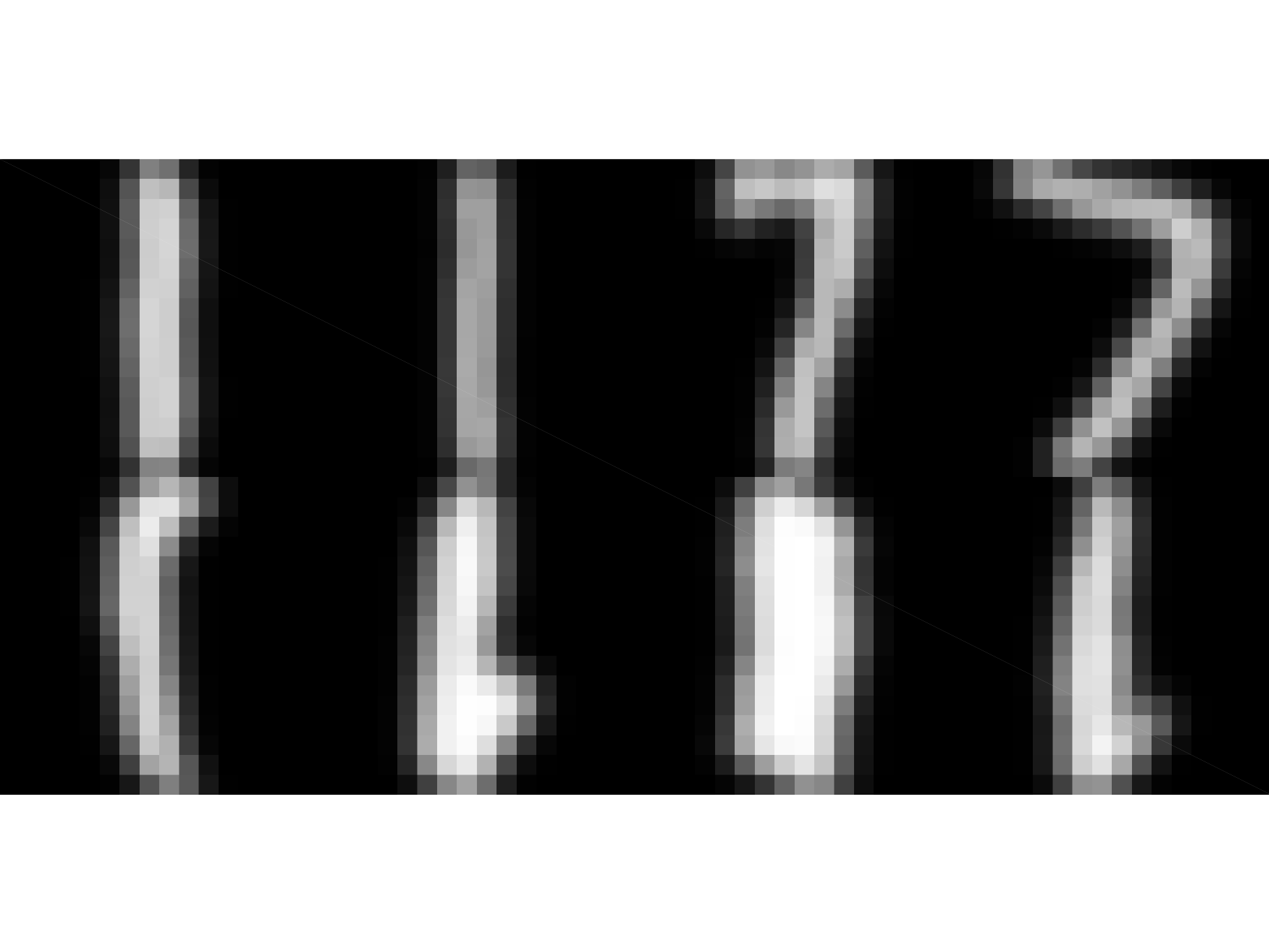}
\caption{USPS anomaly detection results. The first row gives some typical '1's and '7's. The second row plots the four abnormal '1's identified in Figure \ref{bar}.}
\label{anomalysample}
\end{figure}


Figure \ref{bar} shows the $l_2$ norm of columns in $S$. For visual quality, we apply thresholding with a threshold of 4 to get rid of small values. We can see that all '7's are found. Besides, four '1's at columns 16, 22, 49 and 130 also appear. As shown in Figure \ref{anomalysample}, these four '1's are written in a way different from the rest of '1's. 

\section{Conclusion}
This paper investigates a nonconvex approach to the robust principal component analysis (RPCA) problem. In particular, we provide a novel matrix rank approximation, which is more robust and less biased than the nuclear norm. We devise an augmented Lagrange multiplier framework to solve this nonconvex optimization problem. Extensive experimental results demonstrate that our proposed approach outperforms previous algorithms. Our algorithm can be used as a powerful tool to efficiently separate low-dimensional and sparse structure for high-dimensional data. 
It would be interesting to establish more theoretical properties to the proposed nonconvex approach, for example, the theoretical guarantees for the estimator to be consistent. 

\section{Acknowledgments}
This work is supported by US National Science
Foundation Grants IIS 1218712. The corresponding author is Qiang
Cheng.

\appendices
\section{Proof of Theorem 1}
\setcounter{theorem}{0}
\begin{theorem}
\cite{kang2015robust}, \cite{peng2015subspace} Let $A=U\Sigma_A V^T$ be the SVD of $A\in \mathbf{\mathcal{R}}^{m\times n}$ and $\Sigma_A=diag(\sigma_A)$. Let $F(Z)=f\circ\sigma(Z)$ be a unitarily invariant function 
and $\mu>0$ 
. Then an optimal solution to the following problem

\begin{equation}
\min_Z F(Z)+\frac{\mu}{2}\left\|Z-A\right\|_F^2,
\label{theoremprob}
\end{equation}
is $Z^*= U\Sigma_Z^*V^T$, where $\Sigma_Z^*=diag(\sigma^*)$ and  $\sigma^* = \mathrm{prox}_{f, \mu} (\sigma_{A})$. Here $\mathrm{prox}_{f, \mu} (\sigma_{A})$ is the Moreau-Yosida operator, defined as 
\begin{equation}
\label{ascalar}
\mathrm{prox}_{f, \mu} (\sigma_A) := \argmin_{\sigma\geq0} f(\sigma) + \frac{\mu}{2}\|\sigma - \sigma_A\|_2^2.
\end{equation}
\end{theorem}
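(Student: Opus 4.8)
The plan is to reduce the matrix problem \eqref{theoremprob} to the scalar problem \eqref{ascalar} on singular values, with von Neumann's trace inequality serving as the central tool. The objective separates cleanly once the Frobenius term is expanded, and the unitary invariance of $F$ lets us extract a lower bound that depends on $Z$ only through its singular values.

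First I would expand $\|Z-A\|_F^2 = \|Z\|_F^2 - 2\langle Z, A\rangle + \|A\|_F^2$ and note that $\|Z\|_F^2 = \sum_i \sigma_i(Z)^2$ and $\|A\|_F^2 = \sum_i \sigma_i(A)^2$ depend only on the singular values. The single term coupling the singular vectors of $Z$ to those of $A$ is the inner product $\langle Z, A\rangle = \mathrm{tr}(Z^T A)$. The key step is to bound this coupling by von Neumann's trace inequality, $\langle Z, A\rangle \le \sum_i \sigma_i(Z)\,\sigma_i(A) = \langle \sigma(Z), \sigma_A\rangle$ (singular values sorted in decreasing order), where equality holds exactly when $Z$ admits a singular value decomposition $Z = U\,\mathrm{diag}(\sigma(Z))\,V^T$ reusing the singular-vector matrices $U, V$ of $A$ with matched ordering.

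Since $F(Z) = f(\sigma(Z))$ by unitary invariance, substituting this bound yields the lower estimate
\begin{equation*}
F(Z) + \frac{\mu}{2}\|Z-A\|_F^2 \ge f(\sigma(Z)) + \frac{\mu}{2}\|\sigma(Z) - \sigma_A\|_2^2 ,
\end{equation*}
whose right-hand side depends on $Z$ solely through its singular-value vector $\sigma(Z)$. Minimizing this lower bound over all admissible nonnegative vectors $\sigma \ge 0$ is precisely the Moreau--Yosida problem \eqref{ascalar} that defines $\sigma^* = \mathrm{prox}_{f,\mu}(\sigma_A)$. I would then verify that the candidate $Z^* = U\,\mathrm{diag}(\sigma^*)\,V^T$ attains this lower bound and is therefore a global minimizer: it has singular values $\sigma^*$ so $F(Z^*) = f(\sigma^*)$, and because it reuses the singular vectors of $A$ it achieves equality in von Neumann's inequality, whence $\langle Z^*, A\rangle = \langle \sigma^*, \sigma_A\rangle$ and the quadratic term collapses to $\frac{\mu}{2}\|\sigma^* - \sigma_A\|_2^2$.

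The main obstacle is the equality-case analysis of von Neumann's inequality: it requires that some minimizer $\sigma^*$ of \eqref{ascalar} can be taken sorted in the same decreasing order as $\sigma_A$, so that the shared-singular-vector construction is actually feasible and the bound is tight. This follows from a standard rearrangement argument using that $f$ is absolutely symmetric, so permuting the entries of $\sigma$ leaves $f(\sigma)$ unchanged while any anti-sorted pairing strictly increases $\|\sigma - \sigma_A\|_2^2$; hence a decreasingly sorted minimizer always exists, and the construction of $Z^*$ is legitimate.
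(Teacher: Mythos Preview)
Your proof is correct and follows essentially the same route as the paper's: the paper applies the Hoffman--Wielandt (Mirsky) inequality $\|X-\Sigma_A\|_F^2 \ge \|\Sigma_X-\Sigma_A\|_F^2$ after the change of variable $X = U^T Z V$, which, once the Frobenius norm is expanded, is exactly the von Neumann trace inequality you invoke directly. If anything, your treatment of the equality case---checking that a minimizer $\sigma^*$ of \eqref{ascalar} can be taken decreasingly sorted so that $Z^*=U\,\mathrm{diag}(\sigma^*)\,V^T$ is a bona fide SVD---is more careful than the paper's, which glosses over this point.
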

\begin{proof}
Since $A=U\Sigma_{A}V^T$, then $\Sigma_{A}=U^TAV$. Denoting $X=U^TZV$ which has exactly the same singular values as $Z$, we have
\begin{flalign}
&F(Z)+\frac{\mu}{2}\|Z-A\|_F^{2}&\label{lower}\\
&= F(X)+\frac{\mu}{2}\|X-\Sigma_A\|_{F}^{2},&\label{unitary}
\end{flalign}
\begin{flalign}
&\geq F(\Sigma_X)+\frac{\mu}{2}\|\Sigma_X-\Sigma_A\|_F^{2},&\label{hof}\\
&= F(\Sigma_Z)+\frac{\mu}{2}\|\Sigma_Z-\Sigma_A\|_F^{2},&\label{ten}\\
&= f(\sigma)+\frac{\mu}{2}\|\sigma-\sigma_{A}\|_2^2,&\label{eleven}\\
&\ge f(\sigma^*) + \frac{\mu}{2} \|\sigma^* - \sigma_{ A}\|_2^2. 
\end{flalign}
Note that (\ref{unitary}) hold since the Frobenius norm is unitarily invariant; (\ref{hof}) is due to the Hoffman-Wielandt inequality; and (\ref{ten}) holds as $\Sigma_X = \Sigma_Z$. Thus, (\ref{ten}) is a lower bound of (\ref{lower}). Because $\Sigma_Z = \Sigma_X = X = U^T Z V$, the SVD of $Z$ is $Z = U \Sigma_Z V^T$. By minimizing (\ref{eleven}), we get $\sigma^*$. Hence 
$Z^* = U diag(\sigma^*) V^T$, which  is the optimal solution of problem (\ref{theoremprob}).
\end{proof}

\section{}

\begin{lemma}
\label{21norm}
\cite{yuan2006model}\label{lemma:solve_l2l1} Let $H$ be a given matrix. If the optimal solution to
\begin{eqnarray*}
\min_{W}\vspace{.2cm} \alpha\left\|W\right\|_{2,1} + \frac{1}{2}\left\|W-H\right\|_F^2
\end{eqnarray*}
is $W^*$, then the $i$-th column of $W^*$ is
\begin{eqnarray*}
[W^*]_{:,i}=\left\{
\begin{array}{ll} \frac{\left\|H_{:,i}\right\|_2-\alpha}{\left\|H_{:,i}\right\|_2}H_{:,i}, & \mbox{if $\left\|H_{:,i}\right\|_2>\alpha$};\\
0, & \mbox{otherwise.}
\end{array}\right.
\end{eqnarray*}
\end{lemma}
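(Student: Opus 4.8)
The plan is to exploit the column-wise separability of both terms in the objective, reducing the matrix problem to $n$ decoupled vector problems, each of which admits a closed-form solution via elementary convex subdifferential calculus. First I would note that the $\ell_{2,1}$-norm splits as $\|W\|_{2,1}=\sum_{i=1}^n\|W_{:,i}\|_2$ and the Frobenius penalty splits as $\|W-H\|_F^2=\sum_{i=1}^n\|W_{:,i}-H_{:,i}\|_2^2$. Hence the objective is a sum over columns in which the $i$-th summand depends only on $W_{:,i}$. Minimizing the total objective is therefore equivalent to minimizing each summand independently, and the $i$-th summand is exactly the canonical vector problem $\min_{w}\,\alpha\|w\|_2+\tfrac12\|w-h\|_2^2$ with $w=W_{:,i}$ and $h=H_{:,i}$. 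This separation is the only structural step and is immediate; everything else reduces to solving this one scalar-valued problem.

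Next I would solve the reduced vector problem. Since it is the sum of a convex norm and a strongly convex quadratic, it is strongly convex and thus has a unique global minimizer, characterized by the first-order optimality condition $0\in\alpha\,\partial\|w\|_2+(w-h)$. The argument then splits according to the two regimes of the subdifferential of the Euclidean norm. When $w\neq 0$ the norm is differentiable with $\partial\|w\|_2=\{w/\|w\|_2\}$, so optimality reads $w\bigl(1+\alpha/\|w\|_2\bigr)=h$; this forces $w$ to be a nonnegative scalar multiple of $h$, and matching magnitudes gives $\|w\|_2=\|h\|_2-\alpha$, which is feasible precisely when $\|h\|_2>\alpha$ and yields the shrinkage expression $w=\tfrac{\|h\|_2-\alpha}{\|h\|_2}\,h$. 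When $w=0$ the subdifferential is the closed unit ball $\{g:\|g\|_2\le 1\}$, so the optimality condition becomes $h\in\alpha\{g:\|g\|_2\le 1\}$, i.e.\ $\|h\|_2\le\alpha$, confirming that the zero solution is optimal exactly in the complementary regime.

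The step I expect to require the most care is the handling of the subdifferential at the origin and the verification that the two regimes tile the parameter space consistently: I must check that the boundary case $\|h\|_2=\alpha$ is covered by both branches without contradiction (the nonzero branch degenerates to $w=0$ there, agreeing with the zero branch), and that strong convexity guarantees the stationary point found in each regime is the \emph{global} minimizer rather than merely a critical point. Once these optimality conditions are reconciled, reassembling the per-column minimizers into the matrix $W^*$ gives exactly the stated piecewise formula for $[W^*]_{:,i}$, completing the proof.
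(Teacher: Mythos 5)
Your proof is correct: the column-wise separation of both the $\ell_{2,1}$ term and the Frobenius penalty, followed by the subdifferential analysis of the scalar problem $\min_w \alpha\|w\|_2 + \tfrac{1}{2}\|w-h\|_2^2$ (with the unit-ball subdifferential at $w=0$ and the shrinkage formula for $w \neq 0$), is exactly the standard derivation of this group soft-thresholding operator. The paper itself gives no proof of this lemma --- it is stated with a citation to \cite{yuan2006model} --- and your argument is essentially the one in that reference, with the convexity of the objective making every stationary point a global minimizer, so your flagged concerns about the boundary case $\|h\|_2 = \alpha$ and global optimality resolve exactly as you anticipate.
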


\begin{lemma}
\label{1norm}
\cite{beck2009fast} The shrinkage-thresholding operator is defined as
\begin{equation*}
\begin{split}
\mathbf{\mathcal{T}}_{\lambda,h}(g)&:=\arg\min_{x}\frac{1}{2}hx^{2}+gx+\lambda|x|	\\
& = -\frac{1}{h}(|g|-\lambda)_{+}sign(g)\\
& =
\begin{cases}
-\frac{|g|-\lambda}{h}sign(g), &\mbox{if $|g|>\lambda$};\\
0, &\mbox{otherwise},
\end{cases}\\
\end{split}
\end{equation*}
where $h, g$ and $\lambda$ are scalars.  
\end{lemma}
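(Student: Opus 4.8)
The plan is to exploit the strict convexity of the scalar objective $\phi(x) := \frac{1}{2}hx^{2}+gx+\lambda|x|$ (under the standing assumption $h>0$) and to characterize its unique minimizer through the first-order optimality condition $0\in\partial\phi(x)$. First I would observe that the quadratic term is strictly convex and coercive while the remaining terms are convex, so $\phi$ is strictly convex and attains a unique global minimizer $x^{*}$, which is completely determined by $0\in\partial\phi(x^{*})$. This reduces the lemma to identifying the unique stationary point of a convex one-dimensional function.

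The core of the argument is a short subdifferential computation, split into the differentiable region and the kink at the origin. For $x\neq 0$ the objective is smooth with $\phi'(x)=hx+g+\lambda\,\mathrm{sign}(x)$, whereas at $x=0$ the subdifferential of $|x|$ is the interval $[-1,1]$, so $\partial\phi(0)=g+\lambda[-1,1]=[g-\lambda,\,g+\lambda]$. I would then test each admissible sign of $x^{*}$ against the stationarity condition and read off the closed form.

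The resulting case analysis is routine. If $x^{*}>0$, then $hx^{*}+g+\lambda=0$ forces $x^{*}=(|g|-\lambda)/h$, which is positive precisely when $g<-\lambda$ (equivalently $|g|>\lambda$ with $\mathrm{sign}(g)=-1$); a symmetric computation handles $x^{*}<0$, admissible when $g>\lambda$. Finally $x^{*}=0$ is optimal iff $0\in[g-\lambda,\,g+\lambda]$, i.e. $|g|\le\lambda$, in which case the shrinkage term vanishes. Collecting the three cases and rewriting them with the positive-part function $(\cdot)_{+}$ yields exactly $-\frac{1}{h}(|g|-\lambda)_{+}\,\mathrm{sign}(g)$, matching the stated operator $\mathbf{\mathcal{T}}_{\lambda,h}(g)$.

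I do not anticipate a genuine obstacle here; the only point requiring care is the nonsmoothness at $x=0$, where one must use the subdifferential interval rather than a formal derivative, and must verify that the candidate nonzero solutions are discarded exactly on the complementary range $|g|\le\lambda$ so that no overlap or gap arises. I would also flag the hypothesis $h>0$, without which $\phi$ is neither strictly convex nor bounded below and the closed form fails; since in our application $h=\mu^{t}>0$, this is automatically met.
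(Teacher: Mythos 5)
The paper does not actually prove this lemma: it is imported verbatim from \cite{beck2009fast} and stated in Appendix~B without argument, so there is no in-paper proof to compare against and your proposal is judged on its own merits. On those merits it is correct and complete: this is the standard derivation of the soft-thresholding operator, and it is essentially the argument underlying the cited reference. Strict convexity plus coercivity (given $h>0$) gives a unique minimizer characterized by $0\in\partial\phi(x^{*})$; your subdifferential computation $\partial\phi(0)=[g-\lambda,\,g+\lambda]$ and $\phi'(x)=hx+g+\lambda\,\mathrm{sign}(x)$ for $x\neq 0$ is right; and the three cases check out, e.g.\ for $x^{*}>0$ one gets $hx^{*}+g+\lambda=0$, admissible exactly when $g<-\lambda$, whence $x^{*}=(-g-\lambda)/h=(|g|-\lambda)/h=-\tfrac{1}{h}(|g|-\lambda)_{+}\,\mathrm{sign}(g)$, with the symmetric case for $g>\lambda$ and the stationarity of $0$ precisely on $|g|\le\lambda$, so the three regimes partition the range of $g$ with no gap or overlap. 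Your flag on $h>0$ is well taken, since the lemma as printed only says ``$h,g,\lambda$ are scalars'' while the claim fails for $h\le 0$; for the same reason you should also record $\lambda\ge 0$, which you use tacitly both for convexity of $\lambda|x|$ and when writing $\partial\phi(0)$ as the interval $[g-\lambda,g+\lambda]$ and when concluding $(|g|-\lambda)_{+}=0$ in the third case. In the paper's application $h=\mu^{t}>0$ and $\lambda>0$, so both hypotheses hold and your proof supplies exactly what the citation outsources.
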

\setcounter{theorem}{2}
\begin{theorem}
\label{lewis}
\cite{lewis2005nonsmooth} Suppose $F: \mathbf{\mathcal{R}}^{n_1\times n_2}\rightarrow \mathbf{\mathcal{R}}$ is represented as $F(X)=f \circ \sigma(X)$, where $X\in\mathbf{\mathcal{R}}^{n_1\times n_2} $ with SVD 
 $X=U diag(\sigma_1, \cdots, \sigma_n) V^T$, $n=\min(n_1, n_2)$, and $f$ is differentiable. The gradient of $F(X)$ at $X$ is
\begin{equation}
\label{deritheorem}
\frac{\partial F(X)}{\partial X}=U diag(\theta) V^T,
\end{equation}
where $\theta=\frac{\partial f(y)}{\partial y}|_{y=\sigma (X)}$.
\end{theorem}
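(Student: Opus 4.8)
The plan is to establish the spectral-gradient identity (\ref{deritheorem}) in two stages: first at matrices whose singular values are positive and mutually distinct, where classical perturbation theory for the SVD applies and the chain rule closes the argument at once; then at the remaining degenerate matrices, where singular values coincide and the individual singular values cease to be smooth. The decisive structural hypothesis, implicit in writing $F=f\circ\sigma$, is that $f$ is \emph{absolutely symmetric}, i.e.\ invariant under permutations and sign changes of its arguments. Without it $F$ would not be a well-defined function of a matrix, and with it the candidate expression $U\,\mathrm{diag}(\theta)\,V^T$ will prove independent of the (non-unique) SVD.

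First I would treat the generic case. Fix $X$ whose nonzero singular values $\sigma_1>\cdots>\sigma_r>0$ are simple. Analytic perturbation theory for the SVD shows that each such $\sigma_i$ is a smooth function of $X$ in a neighborhood, with differential $\langle\,\partial\sigma_i/\partial X,\,H\,\rangle=u_i^T H v_i$, i.e.\ $\partial\sigma_i/\partial X=u_i v_i^T$, where $u_i$ and $v_i$ are the associated left and right singular vectors. Since $f$ is differentiable at $\sigma(X)$ by hypothesis, the chain rule then gives
\begin{equation*}
\frac{\partial F(X)}{\partial X}=\sum_{i}\frac{\partial f}{\partial y_i}\Big|_{y=\sigma(X)}\,u_i v_i^T=U\,\mathrm{diag}(\theta)\,V^T,
\end{equation*}
which is exactly (\ref{deritheorem}) on the dense, open set of matrices with distinct positive spectra.

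The hard part is the degenerate case, where some singular values coincide and the individual maps $\sigma_i$ are no longer differentiable, so the chain rule is unavailable. I would argue in two steps. For \emph{well-definedness}, absolute symmetry forces $\partial f/\partial y_i=\partial f/\partial y_j$ whenever $\sigma_i=\sigma_j$, so $\mathrm{diag}(\theta)$ commutes with the orthogonal block-rotations acting within each set of coincident singular directions; hence $U\,\mathrm{diag}(\theta)\,V^T$ does not depend on the particular $U,V$ and is a legitimate candidate gradient. For \emph{identification}, since $f$ is differentiable at $\sigma(X)$, the nonsmooth-analytic theory of spectral functions in \cite{lewis2005nonsmooth} shows that $F$ is itself differentiable at $X$ with gradient equal to this candidate; the symmetry observation, together with continuity of both sides on the locus where $f$ is $C^1$, is what legitimizes passing from the dense generic set to the degenerate point. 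I expect this differentiability-at-degeneracy step to be the sole genuine obstacle, everything else being perturbation theory and bookkeeping.

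Finally, to deploy (\ref{deritheorem}) in the convergence analysis it suffices to specialize $f(\sigma)=\sum_i (1+\gamma)\sigma_i/(\gamma+\sigma_i)$, whose scalar derivative is $(1+\gamma)\gamma/(\gamma+\sigma_i)^2$; this yields $\theta_i=(1+\gamma)\gamma/(\gamma+\sigma_i)^2$, whose value at $\sigma_i=0$ is $(1+\gamma)/\gamma$, matching the expression used after (\ref{scalar}). The one subtlety is that this scalar map is only one-sidedly differentiable at the boundary $\sigma_i=0$, so there the stated value is the right derivative; this is consistent precisely because the proximal subproblem (\ref{scalar}) is posed over $\sigma\ge 0$.
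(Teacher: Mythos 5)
First, a point of reference: the paper contains no proof of this statement at all --- Theorem \ref{lewis} is imported verbatim from \cite{lewis2005nonsmooth} as a known result, so your proposal can only be judged as a free-standing argument. Judged that way, much of it is sound and even sharper than the paper's statement. The generic-case computation is correct and classical: at matrices with simple positive singular values, $\partial \sigma_i/\partial X = u_i v_i^T$ by perturbation theory, and the chain rule gives $U\,\mathrm{diag}(\theta)\,V^T$. Your observation that $f$ must be absolutely symmetric for $F$ to be well defined, and that this symmetry forces $\partial f/\partial y_i = \partial f/\partial y_j$ whenever $\sigma_i = \sigma_j$ (so the candidate gradient is independent of the non-unique SVD), is correct and supplies a hypothesis the paper's statement silently omits. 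Your closing remark is also apt: for the paper's specific $f$, the absolutely symmetric extension has a kink at $\sigma_i = 0$, so the value $(1+\gamma)/\gamma$ used in the convergence analysis after (\ref{scalar}) is only a one-sided derivative, legitimized by the constraint $\sigma \geq 0$ in the proximal subproblem --- a subtlety the paper glosses over.

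The genuine gap is in your degenerate case, which you yourself flag as the sole real obstacle: there you ``identify'' the gradient by asserting that the nonsmooth-analytic theory of \cite{lewis2005nonsmooth} shows $F$ is differentiable at $X$ with the candidate gradient. That is the theorem being proved, so your argument is circular at exactly the hard point. The fallback you gesture at --- continuity of both sides on the locus where $f$ is $C^1$, plus density of the generic set --- does not close the gap as stated: differentiability on a dense open set with a continuously extending gradient does not by itself yield differentiability at a degenerate point. To make it work you would need local Lipschitzness of $F$ (available, since $\sigma$ is $1$-Lipschitz in the spectral norm and $f$ is locally Lipschitz when $C^1$) together with a Clarke-subdifferential or mean-value argument showing every limit of gradients from the generic set equals the candidate; and even then you recover the theorem only for $f \in C^1$, not under the stated pointwise differentiability of $f$, for which the full Lewis--Sendov machinery is genuinely required. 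In fairness, at this crux the paper does exactly what you do --- it cites \cite{lewis2005nonsmooth} --- so your proposal is an accurate and usefully annotated roadmap, but not a self-contained proof.
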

%
\bibliographystyle{./IEEEtran}
\bibliography{./IEEEabrv,./pca}

\end{document}